\def\tr{\mathop{\text{tr}}\kern.2ex}
\long\def\comment#1{}
\def\tr{\mathop{\text{Tr}}}
\newcommand{\bel}{\begin{eqnarray}\label}
\newcommand{\eel}{\end{eqnarray}}
\newcommand{\bes}{\begin{eqnarray*}}
\newcommand{\ees}{\end{eqnarray*}}
\newcommand*{\dif}{\mathop{}\!\mathrm{d}}
\newcommand*{\diag}{\text{diag}}
\newcommand*{\vi}{\textsc{VI}(F_{\lambda}, \Omega_{\lambda})}
\newcommand*{\vik}{\textsc{VI}(F_{\lambda^k}, \Omega_{\lambda^k})}
\newcommand*{\T}{\mathsf{T}}
\newcommand{\vertiii}[1]{{\left\vert\kern-0.25ex\left\vert\kern-0.25ex\left\vert #1 
    \right\vert\kern-0.25ex\right\vert\kern-0.25ex\right\vert}}
\DeclareMathOperator*{\argmin}{arg\,min}
\crefname{section}{§}{§§}
\Crefname{section}{§}{§§}
\theoremstyle{definition}
\newtheorem{theorem}{Theorem}
\theoremstyle{definition}
\newtheorem{definition}{Definition}
\theoremstyle{definition}
\newtheorem{example}{Example}
\newtheorem{assumption}{Assumption}
\theoremstyle{definition}
\newtheorem{proposition}{Proposition}
\theoremstyle{definition}
\theoremstyle{definition}
\def\##1\#{\begin{align}#1\end{align}}
\def\$#1\${\begin{align*}#1\end{align*}}
\newcommand\extrafootertext[1]{%
    \bgroup
    \renewcommand\thefootnote{\fnsymbol{footnote}}%
    \renewcommand\thempfootnote{\fnsymbol{mpfootnote}}%
    \footnotetext[0]{#1}%
    \egroup
}
\title{\huge End-to-End Learning and Intervention in Games}
\author[1]{\normalsize Jiayang Li}
\author[1]{\normalsize Jing Yu}
\author[1$,\ast$]{\normalsize Yu (Marco) Nie}
\author[2]{\normalsize Zhaoran Wang}
\affil[1]{\small \textit{Department of Civil and Environmental Engineering, Northwestern University}}
\affil[2]{\small \textit{Department of Industrial Engineering  and Management Science, Northwestern University}}
\date{}
\begin{document}

\extrafootertext{*Corresponding author. Email address: \texttt{y-nie@northwestern.edu}.}

\maketitle

\begin{abstract}

In a social system, the self-interest of agents can be detrimental to the collective good, sometimes leading to social dilemmas. To resolve such a conflict,  a central designer may \textit{intervene} by either redesigning the system or incentivizing the agents to change their behaviors. To be effective, the designer must anticipate how the agents react to the intervention, which is dictated by their often unknown payoff functions. Therefore, \textit{learning} about the agents is a prerequisite for intervention. In this paper, we provide a unified framework for learning and intervention in games. We cast the equilibria of games as individual layers and integrate them into an \textit{end-to-end} optimization framework. To enable the backward propagation through the equilibria of games, we propose two approaches, respectively based on explicit and implicit differentiation. 
Specifically, we cast the equilibria as the solutions to \textit{variational inequalities} (VIs).  The explicit approach unrolls the projection method for solving VIs, while the implicit approach exploits the sensitivity of the solutions to VIs. At the core of both approaches is the differentiation through a projection operator. Moreover, we establish the correctness of both approaches and identify the conditions under which one approach is more desirable than the other.  The analytical results are validated using several  real-world
problems.

\end{abstract}

\section{Introduction }
\label{sec:introduction}

The history of human societies may be viewed as an evolutionary process through which countless self-interested individuals learn to cooperate with each other \citep{harari2014sapiens}.  While human self-interest can be channeled towards socially desirable ends, interventions---in the form of laws, social norms and incentives---are often required.  Indeed, even the ``invisible hand'' of Adam Smith would not work without proper regulations and policing.  This process continues, as it uncovers and resolves previously unknown or non-existent conflicts between self- and collective interest. For example, the potential conflict between overpopulation and welfare states has been heatedly debated among biologists, social scientists, philosophers and alike \citep{hardin1968tragedy, dawkin1976selfish}.  In economics, externalities (a.k.a. neighboring effects) lead to market failures because self-interested agents do not bear the cost/benefit of their actions in its entirety.  Lloyd’s common devastated by excessive grazing \citep{lloyd1833two} and Pigou’s road jammed by selfish drivers \citep{pigou1920economics} are two classical examples.  More recently, \citet{braess1968paradoxon} shows expanding a road network could worsen traffic congestion. This paradoxical phenomenon, related closely to the price of anarchy \citep{koutsoupias1999worst}, demonstrates vividly how unregulated self-interest may be detrimental to the social good. In this paper, we develop a general framework aiming to regulate various systems comprised of self-interested agents.

\textit{Game theory} is often used to determine the most likely outcomes of a system in which agents pursue self-interest and interact with each other \citep{koller1997representations, fang2019integrate}. Although a game-theoretic model of the real world is a simplification, it can be useful for not only explaining and predicting  system outcomes, but also engineering desired ones \citep{mesterton2019introduction}. For example,  many phenomena in ecosystems can be explained as the outcome of the population, in the game of survival, adopting an evolutionarily stable strategy \citep{smith1972game}. Stackelberg games \citep{von1934market}, which concern the strategic interactions between leaders and followers, have seen  applications in economics \citep{basu1995stackelberg}, national security \citep{pita2008deployed} and also environment protection  \citep{yang2014adaptive}. Congestion game \citep{rosenthal1973class}, in which the utility of agents depends on a resource whose cost increases with the number of users, is another example. Many social and engineering systems can be modeled as a congestion game, with applications ranging from planning transportation infrastructure \citep{beckmann1956studies}, managing wireless communication networks \citep{law2012price}, to operating ride-hail companies \citep{calderone2017markov}.

In a game-theoretic system, we define the central designer as an authority whose action can influence the outcome of the game. The central designer can \textit{intervene} in order to guide the self-interested behavior toward a socially desirable outcome.
There are generally two types of interventions: redesign the system or modify the payoffs of the agents through incentives.  Take transportation planning as example. To alleviate congestion, the owner of the road network (typically the ``government''), has the power to  add capacities at selected locations in the network \citep{magnanti1984network, yang1998models}. Alternatively, it may charge road users a ``congestion toll'', in the spirit of \citet{pigou1920economics} and \citet{vickrey1969congestion}, to incentivize them to change travel behaviors (route, departure time, mode, etc.). In order to intervene effectively, the central designer must anticipate the reaction of the agents, which is dictated by their often \textit{unknown} payoff functions.  Thus, an equally important task is to infer, from empirical observations, how the agents evaluate their payoffs.  To this end, the random utility theory \citep{mcfadden2001economic} is widely used to estimate  behavioral parameters of agents in marketing \citep{mcfadden1986choice}, environmental studies \citep{train1998recreation} and travel forecasting \citep{ben1985discrete}.   Alternatively, the learning-theoretic approach is increasingly used to learn, among other things, the optimal strategy of agents \citep{letchford2009learning}  or unknown parameters of games \citep{ling2018game}.

\paragraph{Contribution.} This paper provides a unified framework for learning and interventions in games. It is well known that the equilibria of many games can be formulated as either a complementarity problem \citep{zhao1991general} or an optimization problem \citep{monderer1996potential}, and both can be interpreted as a variational inequality (VI) problem \citep{nagurney1991network}.  Therefore,  we propose to cast the equilibria of games, in the form of VI, as individual layers in an end-to-end optimization framework. Such a general representation of the game-theoretic system in an end-to-end framework poses the challenge of performing forward and backward propagation through the VI layers.

Along the above line, our contributions are as follows. (1) We present a unified optimization model for learning and interventions in games that can be solved by gradient descent methods. (2) We devise a Newton's method for forward propagation over VI layers. Unlike other Newton-type methods for solving VI, e.g. \citep{bertsekas1982projection, taji1993globally}, we directly find the solution via root-finding. (3) We propose two methods for backward propagation over VI layers based on explicit and implicit differentiation, respectively. The explicit approach unrolls the projection methods for solving VIs, and the implicit approach performs differentiation on the fixed-point formulation of VI \citep{kinderlehrer1980introduction}. The implicit approach is more efficient than the explicit approach but is applicable  only when the VI problem is strongly monotone locally. In contrast, the explicit approach works as long as  the convergence of the projection method is guaranteed, which only requires monotoncity.  (4) We give real-world examples to demonstrate the potential applications of the proposed framework.

\paragraph{Related Work.} 
%\label{sec:related}

Interventions in games can be modeled as mathematical programs with equilibrium constraints (MPEC), a class of optimization problems constrained by equilibrium conditions, often represented as a VI problem \citep{luo1996mathematical}.
As our framework casts the equilibria of games as individual layers, it may be viewed as a special class of MPEC.
Solving MPEC typically requires  calculating the derivatives of the equilibria \citep{friesz1990sensitivity}, which is a significant challenge. Another difficulty has to do with the lack of unique equilibrium, a requirement for differentiability  \citep{patriksson2002mathematical}. Besides the MPEC approach, recent work also casts the intervention in games as a bi-level  reinforcement learning (RL) problem, in which the agents’ decision is modeled as a Markov decision process \citep{zheng2020ai}.

Learning payoff parameters in games usually relies on the special structures of games \citep{vorobeychik2007learning, blum2014learning}.  \citet{ling2018game} studied how to learn a normal form game using a differentiable game solver layer in an end-to-end framework.  The convergence and sensitivity analysis in network games \citep{parise2019variational} is another example that enable treating game solvers as individual layers. As VI provides a unified formulation for various equilibrium problems arising from games, our work can be regarded as a generalization of these works.

To inject an appropriate inductive bias into the modeling procedure, recent work shows 
the possibility of embedding differentiable optimization problems as layers in an end-to-end framework \citep{agrawal2019differentiable}.
To differentiate through an optimization problem, one could either unroll the numerical computation \citep{domke2012generic, amos2017input} or implicitly differentiate the optimality conditions, such as the KKT conditions in quadratic programs (QP) \citep{amos2017optnet}, the Pontryagin minimum principle in optimal control problems \citep{jin2019pontryagin}, and the Euler-Lagrange equations in least-action problems \citep{lutter2019deep}. As the solution to a VI problem can usually be characterized as a fixed-point equation via a projection operator, which is equivalent to a QP problem, our work is built on some results in \citep{amos2017optnet, agrawal2019differentiable}.

\paragraph{Organization.} In \cref{sec:framework}, we present a unified optimization framework for  learning and interventions in games.  \cref{sec:vi} focuses on  the VI layer.   We first introduce the methods for forward propagation, including projection methods and a Newton-type method. Then, backward propagation methods based on explicit and implicit differentiation are proposed, and their analytical properties are discussed. In \cref{sec:numerical}, we provide numerical results, inspired by a few real-world applications, that  highlight the capabilities of the proposed framework.

\paragraph{Notation.} Given a set of scalars $a_i$ or functions $f_i(\cdot)$ with $i$ from an indicator set $\Xi$, we denote $a = (a_i)^{\T}$ and $f(\cdot) = (f_i(\cdot))^{\T}$. We define the Euclidean norm of a vector $a \in \mathbb R^n$ as $\|a\|$ and the operator norm of a matrix $A \in \mathbb R^{n \times n}$ induced by Euclidean norm as $\vertiii{A}$. The inner product of two vectors $a, b \in \mathbb R^n$ is defined as $\left<a, b\right> = a^{\T} b$. The Jacobian matrix of $y \in \mathbb R^m$ with respect to $x \in \mathbb R^n$ is denoted as $\partial y / \partial x \in \mathbb R^{m \times n}$.

\section{End-to-end design and learning in games}
\label{sec:framework}

\subsection{Design and learning in games}
\label{sec:design-learning}

We model a system comprised of self-interested agents using game theory. The outcome of a game-theoretic system can be predicted by its Nash equilibrium, where the agents have no incentive to deviate from their current strategies.
\begin{definition}[Nash equilibrium]
Consider a game played by a set of agents $\mathcal N$, where each agent $i \in \mathcal N$ selects a strategy (action) $z_i \in \mathcal Z_i$ to maximize its payoff. Each player’s payoff
is determined by a payoff function $u_i: \mathcal Z \to \mathbb R$, where $\mathcal{Z} = \prod_{i \in \mathcal N} \mathcal{Z}_i$. Formally, $z^* \in \mathcal Z$ is a Nash equilibrium if
\begin{equation}
    u_i(\cdots, z_{i - 1}^*, z_{i}^*, z_{i + 1}^*, \cdots) \geq u_i(\cdots, z_{i - 1}^*, z_{i}, z_{i + 1}^*, \cdots), \quad \text{for all}~z_i \in \mathcal{Z}_i,~i \in \mathcal N.
    \label{eq:nash}
\end{equation}
\end{definition}
In this paper, we formulate the Nash equilibria of games as parametric VI problems defined below.
\begin{definition}[Parametric VI problem]
Given a set $\Omega_{\lambda} \subseteq \mathbb R^n$ and a function $F_{\lambda}: \Omega_{\lambda} \to \mathbb R^n$ parameterized by $\lambda \in \Lambda \subseteq \mathbb R^m$, a parametric VI problem $\vi$ is to find $z^* \in \Omega_{\lambda}$ such that
\begin{equation}
    \left< F_{\lambda}(z^*), \, z - z^*  \right> \geq 0, \quad \text{for all}~z \in \Omega_{\lambda}.
    \label{eq:vi}
\end{equation}
\end{definition}

We focus here on games with continuous strategy sets. In this case, the equilibrium strategy $z^*$ can be reformulated as the solution to a certain $\vi$, where $\Omega_{\lambda}$ is the strategy set and $F_{\lambda}$ is derived from the payoff functions according to the underlying game structure. General methods for establishing the equivalence between Nash Equilibrium and VI, as well as the necessary and sufficient conditions, can be found in \citep{scutari2010convex, mertikopoulos2019learning}; the assumptions on $F_{\lambda}$ and $\Omega_{\lambda}$ made in \cref{sec:fixed} are in line  with these conditions.
Below we first give an example of the VI formulation of routing game.
\begin{example}[Routing game]
\label{eg:routing}

Consider a set of agents traveling from source nodes to sink nodes in a network $G$ with nodes $V$ and edges $E$. Each agent aims to choose a route to minimize the total cost incurred. The equilibrium outcome is determined by Wardrop's first principle \citep{wardrop1952road}: the costs on all used routes between the same source-sink pair are equal and no more than those experienced on any unused routes. Suppose that $x_e$ is the number of agents choosing edge $e$. The generalized cost on each edge can be set as
\begin{equation}
    c_e(x_e; s_e, m_e, \beta_e) = t_e(x_e; s_e) +  \gamma m_e  + \beta_e(x_e),
    \label{eq:cost}
\end{equation}
where $t_e$ is the travel time, modeled as a function of $x_e$ and  the road capacity $s_e$; $\gamma$ is the time value of money; $m_e$ is a monetary cost; and $\beta_e$ represents the ``hidden'' cost that is difficult to measure (e.g., comfort, safety).
Denote $\mathcal{X}$ as the set of feasible edge flows satisfying the flow conservation conditions, then the equilibrium is equivalent to a VI problem \citep{dafermos1980traffic}:
find $x^* \in \mathcal{X}$ such that
\begin{equation}
    \left< c(x^*; s, m, \beta), \, x - x^* \right> \geq 0, \quad \text{for all}~x \in \mathcal{X}.
    \label{eq:edge}
\end{equation}
\end{example}

From the above discussion we can see that the Nash equilibrium of a game depends on the payoff functions. Therefore, to induce a target outcome  (e.g., one that maximizes ``social welfare'') in a game-theoretic system, a central designer must first learn how the agents evaluate the payoff, especially its ``hidden'' components.  To this end, we provide a unified framework for learning and intervention in games as shown in Figure \ref{fig:framework}, where the equilibrium layer is cast as a VI problem.
\begin{figure}[H]
\centering
\includegraphics[width=0.8\textwidth]{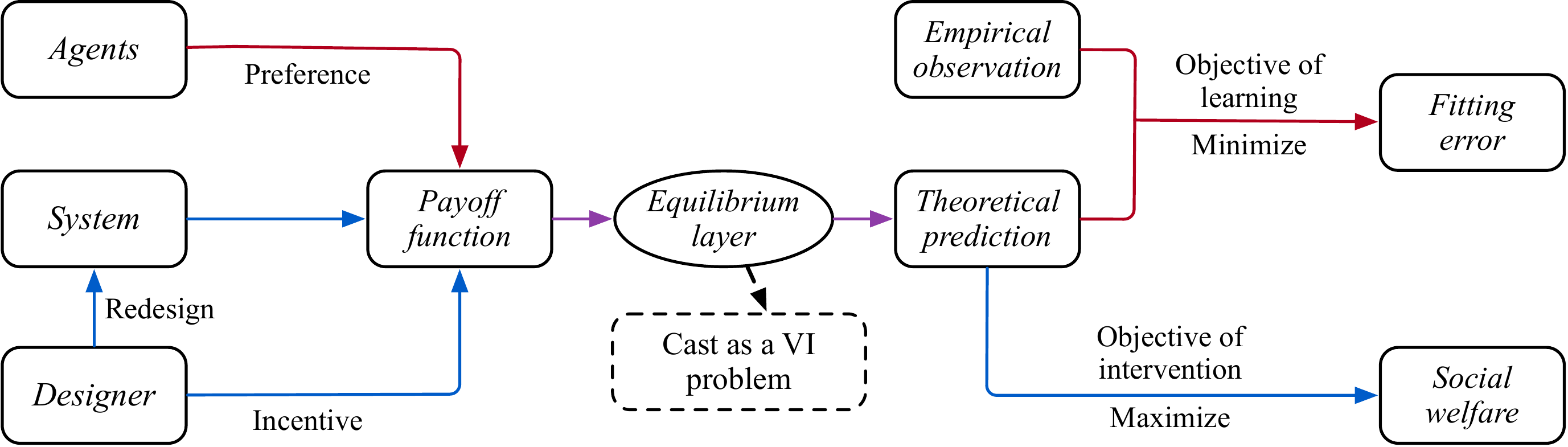}
\caption{A unified framework for learning and intervention in games.}
\label{fig:framework}
\end{figure}
\begin{itemize}[leftmargin=10pt]
    \item The \textit{learning mode} is a regression problem: the designer determines the unknown parameters in agents' payoff functions by minimizing the fitting loss between empirical observations and theoretical predictions.
    \item The \textit{intervention mode} is a central design problem: the designer modifies the payoffs through incentives or system redesign to induce a target equilibrium.
\end{itemize}

\paragraph{Note.} In many cases, the game designer is expected to design interventions based on  learned payoff functions. However, the learning and intervention functions can also be carried out independently.  Sometimes, the primary interest is to understand agent behaviors, and hence only the learning mode is needed.  Alternatively,  when all game inputs are known, the focus would be on intervention.

\subsection{End-to-end optimization}
\label{sec:vi-game}

\paragraph{Problem formulation.}

Both the learning and intervention modes can be formulated as an end-to-end optimization problem given as follows.
\begin{equation}
\begin{split}
    \min_{\lambda}~~&\bar L = L(\bar p) \\
    \text{s.t.}~&\,\bar p  = p(\bar z, \lambda), 
    ~\bar z \ \text{solves} \ \vi,
    ~R_a \leq B(\lambda) \leq  R_b.
\end{split}
\label{eq:framework}
\end{equation}
In both modes, $\bar z$ represents the equilibrium strategy, while $F_{\lambda}$ and $\Omega_{\lambda}$ characterize the setup of game as described in \cref{sec:design-learning}. Other variables may have different meanings in the two modes.
\begin{itemize}[leftmargin=10pt]
    \item In the \textit{learning mode}, $\lambda$ represents unobservable parameters in payoff functions. Variable $\bar p$ is the predicted equilibrium system state that can be observed empirically, and the objective $\bar{L}$ is the fitting loss between $\bar p$ and its observation $\hat p$. The constraint on $\lambda$ may be derived from  its physical meaning (e.g. the value of time must be non-negative). If the payoff functions are characterized  by more complicated constructs (e.g. a deep network), the constraint may be unnecessary.
    \item In the \textit{intervention mode},  $\lambda$ represents design parameters. Variable $\bar p$ contains equilibrium system states that dictate social welfare, i.e. $-\bar{L}$.
    The constraint on $\lambda$ may be related to the financial and/or physical resources available to the central designer. 
\end{itemize}
When the two modes are integrated in an application, the learning mode learns $\lambda$ in the layer $\vi$ and shares the learned value with the intervention mode. The intervention mode then directly/indirectly changes $F_{\lambda}$ to induce a target equilibrium. Below we briefly exemplify the learning and intervention problems in a routing game.
\addtocounter{example}{-1}
\begin{example}[continued]
\label{eg:vi-routing}
Of the three components in the cost function \eqref{eq:cost}, the monetary cost $m_e$ is easy to estimate, and the travel time $t_e$ can be determined from empirical observations, see e.g. the Bureau of Public Roads (BPR) function \citep{united1964traffic}.  The hidden cost function $\beta_e$, as well as the value of time $\gamma$, however, is usually unknown. If the central designer wants to reduce excessive congestion at equilibrium, she can either  adjust $s_e$ or change $m_e$ (e.g., by imposing a toll) for a subset of edges. However,  she must first learn $\beta_e(x_e)$ and $\gamma$ in order to correctly anticipate the overall impact of her action on the generalized cost.
\begin{itemize}[leftmargin=10pt]
    \item In the learning mode, decision variables include $\gamma$ and parameters in $\beta_e(x_e)$. Denote the equilibrium flow on edge edge $e$ predicted by \eqref{eq:edge} as $\bar x_e$. Suppose that $\hat E$ is the set of edges where the flow can be observed. Then the system states used for fitting are the predicted flow $\bar x_e$ and the corresponding observed flow $\hat x_e$ for $e \in \hat E$. The objective can be the squared loss $\bar{L} = \sum_{e \in \hat E} (\bar x_e - \hat x_e)^2$. 
    \item In the intervention mode, the decision variables are $s_e$ and/or $m_e$. The social welfare can be measured by the total travel delay, which is determined by the predicted flow $\bar x_e$ and travel time $t_e(\bar x_e; s_e)$ on each edge $e$. Here, the objective is the total time costs $\bar{L} = \sum_{e \in E} t_e(\bar x_e; s_e) \cdot \bar x_e$.
\end{itemize}

\end{example}

\paragraph{Gradient descent method.} In this paper, we aim to solve \eqref{eq:framework} using the gradient descent methods. At the current point $\lambda^k$, in the \textit{forward propagation}, we need to compute $\bar{L}^k = L(p_{\lambda}^k)$, where $p_{\lambda}^k = p(\bar z^{k}, \lambda^k)$ and $\bar z^{k}$ is the solution to $\vik$. While in the \textit{backward propagation}, we need to update $\lambda^k$ in the opposite direction of the gradient of the objective function, specifically
\begin{equation}
    \frac{\partial \bar{L}^k}{\partial \lambda^k} = \nabla L(p_{\lambda}^k) \cdot \left(
    \nabla_{z} p(\bar z^{k}, \lambda^k) \cdot \frac{\partial \bar z^{k}}{\partial \lambda^k} + \nabla_{\lambda} p(\bar z^{k}, \lambda^k)
    \right).
    \label{eq:descent}
\end{equation}

\section{Differentiable VI layer}
\label{sec:vi}

\subsection{Fixed-point formulation of VI}
\label{sec:fixed}
A VI problem can be equivalently formulated as a fixed-point problem via a projection operator.

\begin{definition}[Projection operator]
\label{def:projection}
The projection operator $\mathcal{P}_{\Omega}$ with respect to the Euclidean norm is defined as
\begin{equation}
    \mathcal{P}_{\Omega}(y) = \argmin_{y^* \in \Omega} \|y^* - y\|.
\end{equation}
\end{definition}

\begin{proposition}[Fixed-point formulation of VI \citep{kinderlehrer1980introduction}]
\label{thm:opt}
The point $z^* \in \Omega_{\lambda}$ is a solution to $\vi$ if and only if for any $r > 0$, $z^*$ is a fixed point of the mapping $h_{\lambda}(z): \Omega_{\lambda} \to \Omega_{\lambda}$ defined as
\begin{equation}
    h_{\lambda}(z) = \mathcal{P}_{\Omega_{\lambda}}(z - r F(z, \lambda)).
\end{equation}
\end{proposition}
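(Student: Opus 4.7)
The plan is to establish the equivalence by invoking the variational characterization of the Euclidean projection: for a closed convex set $\Omega_\lambda$, a point $y^\ast \in \Omega_\lambda$ equals $\mathcal{P}_{\Omega_\lambda}(y)$ if and only if
\$
\left\langle y^\ast - y,\, z - y^\ast \right\rangle \geq 0 \quad \text{for all } z \in \Omega_\lambda.
\$
With this characterization in hand, the proposition reduces to simple algebra. So the whole proof is essentially two steps: (i) prove (or cite) the projection characterization, and (ii) apply it with $y = z^\ast - r F(z^\ast, \lambda)$ and $y^\ast = z^\ast$.

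\textbf{Step 1: projection characterization.} I would derive this as the first-order optimality condition of the convex minimization problem in \cref{def:projection}. Because $\Omega_\lambda$ is assumed closed and convex (this is implicit in the VI framework; the paper states the conditions on $\Omega_\lambda$ in \cref{sec:fixed}) and the objective $\tfrac{1}{2}\|y^\ast - y\|^2$ is strictly convex and differentiable, $y^\ast$ is the unique minimizer iff $\langle \nabla_{y^\ast}(\tfrac12\|y^\ast - y\|^2), z - y^\ast\rangle \geq 0$ for all $z \in \Omega_\lambda$, i.e., $\langle y^\ast - y, z - y^\ast\rangle \geq 0$.

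\textbf{Step 2: the two directions.} For the ``only if'' direction, suppose $z^\ast$ solves $\vi$. Fix any $r > 0$ and set $y = z^\ast - r F_\lambda(z^\ast)$. Then for all $z \in \Omega_\lambda$,
\$
\left\langle z^\ast - y,\, z - z^\ast \right\rangle = \left\langle r F_\lambda(z^\ast),\, z - z^\ast \right\rangle \geq 0
\$
by \eqref{eq:vi}, so Step 1 identifies $z^\ast = \mathcal{P}_{\Omega_\lambda}(y) = h_\lambda(z^\ast)$. For the ``if'' direction, suppose $z^\ast = h_\lambda(z^\ast)$ for some $r > 0$. Step 1 then gives $\langle r F_\lambda(z^\ast), z - z^\ast\rangle \geq 0$ for all $z \in \Omega_\lambda$; dividing by $r > 0$ recovers \eqref{eq:vi}.

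\textbf{Main obstacle.} There is no real obstacle once the projection characterization is available, so the only subtlety is to make sure the convexity and closedness of $\Omega_\lambda$ are explicit (they justify both existence/uniqueness of the projection and the sufficiency of the first-order condition). Since this proposition is a restatement of a classical result from \citet{kinderlehrer1980introduction}, I would keep the proof short and, if the paper is space-constrained, simply reference the projection characterization as a well-known lemma rather than re-deriving it.
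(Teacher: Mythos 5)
Your proof is correct: the paper itself offers no proof of this proposition, simply citing \citet{kinderlehrer1980introduction}, and your argument is exactly the classical one found there — the variational (first-order optimality) characterization of the Euclidean projection onto a closed convex set, applied with $y = z^* - r F_{\lambda}(z^*)$, after which both directions are immediate algebra. Your remark that closedness and convexity of $\Omega_{\lambda}$ must be in force is also consistent with the paper, whose Assumption 2 ($\Omega_{\lambda}$ a bounded polyhedral set) guarantees both.
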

Proposition \ref{thm:opt} implies that finding a solution to $\vi$ is equivalent to finding a fixed point of $h_{\lambda}(z)$. The projection operator $\mathcal{P}_{\Omega_{\lambda}}(y)$ is denoted as $g_{\lambda}(y)$ hereafter in this paper and the scalar $r$ is omitted in $g_{\lambda}(y)$ for simplicity.

Based on the fixed-point formulation, the existence and uniqueness conditions of $\vi$ can be established under certain monotone conditions of $F_{\lambda}$ \citep{hartman1966some, mancino1972convex}. See Appendix \ref{app:monotone} for more details.
In the sequel, we make the following assumptions, under which $\vi$ has at least one solution, as well as some other properties for further analysis.

\begin{assumption}
\label{ass:monotone}
The function $F_{\lambda}$ is continuous differentiable and monotone for all $\lambda \in \Lambda$. 
\end{assumption}
\vspace{-10pt}
\begin{assumption}
The set $\Omega_{\lambda}$ is a bounded polyhedral set for all $\lambda \in \Lambda$. 
\end{assumption}
\vspace{-10pt}
\begin{assumption}
\label{ass:projection}
The mapping $(y, \lambda) \mapsto g_{\lambda}(y)$ is continuously differentiable.
\end{assumption}

\subsection{Forward propagation}

\subsubsection{Projection method}
The fixed-point formulation of VI implies that one may iteratively project $z$ to $h_{\lambda}(z)$ until a fixed point is found. Such an idea leads to a class of algorithms for solving VI problems, known as the projection method. 
A sufficient condition for convergence is established by \citet{dafermos1983iterative}.
\begin{proposition}[Convergence conditions for the projection method \citep{dafermos1983iterative}]
\label{thm:convergence}
Starting with $z^0 \in \Omega_{\lambda}$, the sequence generated by $z^{k + 1} = h_{\lambda}(z^{k})$ converges to $\vi$ if
\begin{equation}
    \vertiii{I - r \cdot \nabla_z F_{\lambda}(z)} < 1, \quad \text{for all}~z \in \Omega_{\lambda}.
    \label{eq:conv-condition}
\end{equation}
\end{proposition}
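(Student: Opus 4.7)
The plan is to show that under the hypothesis, the mapping $h_{\lambda}$ is a contraction on $\Omega_{\lambda}$, and then invoke the Banach fixed-point theorem together with \cref{thm:opt} to conclude that the iterates converge to the unique fixed point of $h_{\lambda}$, which is the solution to $\vi$.

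First, I would exploit the fact that the Euclidean projection $g_{\lambda} = \mathcal{P}_{\Omega_{\lambda}}$ is non-expansive, a standard consequence of $\Omega_{\lambda}$ being closed and convex (which it is, as a bounded polyhedral set). This yields, for any $z_1, z_2 \in \Omega_{\lambda}$,
\$
\|h_{\lambda}(z_1) - h_{\lambda}(z_2)\| \leq \|(z_1 - z_2) - r \cdot (F_{\lambda}(z_1) - F_{\lambda}(z_2))\|.
\$
Next, using the continuous differentiability of $F_{\lambda}$ (Assumption \ref{ass:monotone}) and the fact that $\Omega_{\lambda}$ is convex, I would apply the fundamental theorem of calculus along the segment $z(t) = z_2 + t(z_1 - z_2)$ to write
\$
(z_1 - z_2) - r \cdot (F_{\lambda}(z_1) - F_{\lambda}(z_2)) = \int_0^1 \bigl(I - r \cdot \nabla_z F_{\lambda}(z(t))\bigr)(z_1 - z_2) \, \dif t.
\$
Taking norms, pulling the operator norm inside the integral, and using that $z(t) \in \Omega_{\lambda}$ by convexity gives
\$
\|h_{\lambda}(z_1) - h_{\lambda}(z_2)\| \leq \Bigl(\sup_{z \in \Omega_{\lambda}} \vertiii{I - r \cdot \nabla_z F_{\lambda}(z)}\Bigr) \cdot \|z_1 - z_2\|.
\$

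The step that needs a little care is upgrading the pointwise strict inequality in \eqref{eq:conv-condition} to a uniform bound $\alpha < 1$. Here I would use that $\Omega_{\lambda}$ is a bounded polyhedral set and hence compact, and that $z \mapsto \vertiii{I - r \cdot \nabla_z F_{\lambda}(z)}$ is continuous by the continuous differentiability of $F_{\lambda}$. Therefore the supremum is attained, and the hypothesis $\vertiii{I - r \cdot \nabla_z F_{\lambda}(z)} < 1$ for all $z$ implies the existence of $\alpha < 1$ with $\sup_{z \in \Omega_{\lambda}} \vertiii{I - r \cdot \nabla_z F_{\lambda}(z)} = \alpha$. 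This establishes that $h_{\lambda}$ is an $\alpha$-contraction of the complete metric space $\Omega_{\lambda}$ into itself.

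Finally, I would invoke the Banach contraction mapping theorem to conclude that the iterates $z^{k+1} = h_{\lambda}(z^k)$ converge (at a geometric rate $\alpha^k$) to the unique fixed point $z^*$ of $h_{\lambda}$ in $\Omega_{\lambda}$. By \cref{thm:opt}, this $z^*$ solves $\vi$, completing the proof. The main obstacle, as noted, is the upgrade from pointwise to uniform contraction; everything else is routine application of non-expansiveness of the projection and a calculus identity.
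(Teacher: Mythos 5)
Your argument is correct and complete: non-expansiveness of the Euclidean projection, the integral form of the mean value theorem to bound the Lipschitz constant of $z \mapsto z - r F_{\lambda}(z)$ by $\sup_{z \in \Omega_{\lambda}} \vertiii{I - r \cdot \nabla_z F_{\lambda}(z)}$, compactness of the bounded polyhedron to turn the pointwise bound \eqref{eq:conv-condition} into a uniform $\alpha < 1$, and Banach's fixed-point theorem combined with Proposition \ref{thm:opt}. The paper itself supplies no proof of this proposition --- it is quoted from \citet{dafermos1983iterative} --- and your contraction-mapping argument is essentially the standard one given there, so there is nothing to add.
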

If $F_{\lambda}$ is strictly monotone, the convergence condition \eqref{eq:conv-condition} is satisfied as long as $r$ is sufficiently small. However, when $F_{\lambda}$ is monotone but not strictly monotone, such a condition sometimes is not  satisfied for all $r > 0$. In this case, provided that $F_{\lambda}$ is co-coercive with module $c$ on $\Omega_{\lambda}$, the sequence also globally converges to the solution to $\vi$ if $r < 2c$. See \citep{marcotte1995convergence} for more details. In the sequel, we also make the following assumption.

\begin{assumption}
\label{ass:coercive}
The function $F_{\lambda}$ is co-coercive with module $c$ on $\Omega_{\lambda}$ for all $\lambda \in \Lambda$. 
\end{assumption}
Under Assumptions \ref{ass:monotone} and \ref{ass:coercive}, a sufficiently small $r$ can guarantee convergence for projection methods. The rate of convergence of the projection method is typically linear \citep{bertsekas1982projection}, while better results can be obtained if we scale $z^k$ at each iteration by a positive definite matrix $G^k$ containing first derivative information of $F_{\lambda}(z^k)$ (in which case it may be viewed as a variant of the Newton's method \citep{bertsekas1982projection}). In the latter case, finding a suitable $G^k$ is often a nontrivial task.

\subsubsection{Newton's method}

Finding a solution of $\vi$ is equivalent to finding a root of the following equation
\begin{equation}
    e_{\lambda}(z) = z - h_{\lambda}(z) = 0.
    \label{eq:newton-fixed}
\end{equation}
Now we are ready to present one of our main contributions, a Newton-type method that solves VI problem by directly locating the solution through root-finding. 
Firstly, as both $g_{\lambda}(y)$ and $F_{\lambda}(z)$ are continuously differentiable per assumption, $e_{\lambda}(z)$ is also continuously differentiable based on the chain rule. At the point $z^k$, the Newton direction $d^k$ can be found by solving
\begin{equation}
    \nabla_z e_{\lambda}(z^k) \cdot d^k = e_{\lambda}(z^k).
    \label{eq:newton-direction}
\end{equation}
Denote $y^k = z^k - r F_{\lambda}(z^k)$, then 
\begin{equation}
    \nabla_z e_{\lambda}(z^k) =  I - \nabla_{y} g_{\lambda}(y^k) \cdot \left(I - r \cdot \nabla_{z} F_{\lambda}(z^k)\right).
    \label{eq:hessian}
\end{equation}
To obtain $\nabla_{y} g_{\lambda}(y^k)$, we need to differentiate through the projection operator $g_{\lambda}(y)$. As  $\Omega_{\lambda}$ is a polyhedral set, $g_{\lambda}(y)$ can be converted into a QP problem:
\begin{equation}
\begin{split}
z^* = \argmin_z ~~&\frac{1}{2} z^T z - y^T z \\[-2.5pt]
\text{s.t.} ~~&A_{\lambda} z \leq b_{\lambda}, \quad M_{\lambda} z = q_{\lambda}.
\end{split}
\label{eq:qp}
\end{equation}
This allows us to utilize the following result established by \citet{amos2017optnet}.
\begin{proposition}[Differentiating through a QP problem \citep{amos2017optnet}]
\label{pro:optnet}
In the QP problem \eqref{eq:qp}, denote $\nu$ and $\mu \geq 0$ as the dual variables on the equality and the inequality constraints, respectively. Then the derivatives of the optimal solution $z^*$, $\mu^*$ and $\nu^*$ satisfy the following linear equations
\begin{equation}
\begin{bmatrix}
I & A_{\lambda}^{\T} & M_{\lambda}^{\T}\\
\diag(\mu) A_{\lambda} & \diag(A_{\lambda} z^* - b_{\lambda}) & 0\\
M_{\lambda}  & 0 & 0
\end{bmatrix}
\begin{bmatrix}
\dif z^* \\
\dif \mu^* \\
\dif \nu^*
\end{bmatrix}
= 
\begin{bmatrix}
\dif y - \dif A_{\lambda}^{\T} \mu^* - \dif M_{\lambda}^{\T}\nu^*\\
- \diag(\mu^*) \dif A_{\lambda} z^* + \diag(\mu^*) \dif b_{\lambda}\\
-\dif M_{\lambda} z^* + \dif q_{\lambda}
\end{bmatrix}.
\label{eq:jac}
\end{equation}
\end{proposition}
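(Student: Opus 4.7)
The plan is to derive the linear system by implicit differentiation of the KKT conditions of the QP \eqref{eq:qp}. Because the objective $\tfrac{1}{2} z^\T z - y^\T z$ is strongly convex and the constraints are affine, the KKT conditions are both necessary and sufficient. At a primal-dual optimizer $(z^*, \mu^*, \nu^*)$ they read
\begin{align*}
z^* - y + A_\lambda^\T \mu^* + M_\lambda^\T \nu^* &= 0, \\
\diag(\mu^*)(A_\lambda z^* - b_\lambda) &= 0, \\
M_\lambda z^* - q_\lambda &= 0,
\end{align*}
together with the sign/feasibility conditions $\mu^* \geq 0$ and $A_\lambda z^* \leq b_\lambda$, which do not enter the infinitesimal calculation. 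Under linear independence of the active inequality gradients and strict complementarity, the implicit function theorem guarantees that $(z^*, \mu^*, \nu^*)$ is a smooth function of the data $(y, A_\lambda, b_\lambda, M_\lambda, q_\lambda)$, so the three equalities can be differentiated termwise.

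Next I would take the total differential of each of the three equations, treating every appearance of the primal variables, the multipliers, and the data as variable. Stationarity gives $\dif z^* + A_\lambda^\T \dif\mu^* + M_\lambda^\T \dif\nu^* = \dif y - \dif A_\lambda^\T \mu^* - \dif M_\lambda^\T \nu^*$, reproducing the first block row of \eqref{eq:jac}. The equality constraint yields $M_\lambda \dif z^* = \dif q_\lambda - \dif M_\lambda z^*$, the third block row. For complementary slackness the product rule gives $\diag(\dif \mu^*)(A_\lambda z^* - b_\lambda) + \diag(\mu^*)\bigl(\dif A_\lambda\, z^* + A_\lambda \dif z^* - \dif b_\lambda\bigr) = 0$; using the identity $\diag(u)\,v = \diag(v)\,u$ to rewrite the first summand as $\diag(A_\lambda z^* - b_\lambda)\dif \mu^*$ produces the second block row. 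Stacking the three differentiated identities in the order $(\dif z^*, \dif \mu^*, \dif \nu^*)$ then assembles the block matrix equation claimed in the proposition.

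The main obstacle is justifying the underlying differentiability of $(z^*, \mu^*, \nu^*)$ in the data, which is equivalent to asking that the coefficient matrix on the left of \eqref{eq:jac} be invertible. Nonsingularity follows because on inactive rows the block $\diag(A_\lambda z^* - b_\lambda)$ is negative definite (with $\mu^*_i = 0$ giving the corresponding $\diag(\mu^*) A_\lambda$ row zero), while on active rows strict complementarity makes $\diag(\mu^*) A_\lambda$ carry the information of active constraint gradients; together with LICQ these pieces interlock so that the full block matrix is nonsingular. If strict complementarity fails, the KKT map is only piecewise smooth and the stated system must be interpreted as picking a particular element of the Clarke generalized Jacobian. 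Following \citep{amos2017optnet}, I would assume the standard regularity conditions hold generically, so the implicit differentiation above is rigorous and the displayed system is the correct sensitivity formula.
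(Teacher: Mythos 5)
Your derivation is correct: differentiating the KKT stationarity, complementary-slackness, and equality-feasibility conditions termwise and using $\diag(u)\,v=\diag(v)\,u$ reproduces each block row of \eqref{eq:jac} exactly, and this is precisely the argument in the cited source \citep{amos2017optnet}; the paper itself states the proposition as an imported result and gives no proof. Your closing remarks on nonsingularity of the coefficient matrix (LICQ plus strict complementarity, with the degenerate case handled only in a generalized-Jacobian sense) are the right caveats and match the regularity tacitly assumed when the paper invokes this system to extract $\partial z^*/\partial y$.
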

Based on Proposition $\ref{pro:optnet}$, we can obtain the Jacobian of $z^*$ with
respect to any parameters. To obtain $\partial z^* / \partial y$, we can substitute $\dif y = I$ and set other differential terms in the right-hand side to zero. In this manner, the Newton direction $d^k$ can be derived at each iteration. Subsequently, we set
\begin{equation}
    z^{k + 1} = z^k - d^k,
    \label{eq:newton-move}
\end{equation}
and move to the next iteration. The following theorem establishes the \textit{local} convergence of this iteration.
\begin{theorem}[Local convergence of Newton's method]
\label{thm:newton}
Suppose that $\vi$ admits a solution $\bar z^*$. If $\nabla_z e_{\lambda}(\bar z^*)$ is nonsingular, then there exists a neighborhood $\mathcal B(\bar z^*)$ of $\bar z^*$, such that when starting from $x^0 \in \mathcal B(\bar z^*)$, the sequence generated by \eqref{eq:newton-move} converges to $\bar z^*$ superlinearly. 
\end{theorem}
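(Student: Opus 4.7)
The plan is to recognize the iteration \eqref{eq:newton-move} as the classical Newton's method applied to the nonlinear system $e_{\lambda}(z)=0$, and then verify the three standard hypotheses (root, $C^1$ regularity, and nonsingularity of the Jacobian at the root) so that the textbook local superlinear convergence theorem for Newton's method (e.g., Ortega--Rheinboldt or Dennis--Schnabel) applies directly. The first observation is that $\bar z^*$ is in fact a root: by \cref{thm:opt}, any solution of $\vi$ satisfies $\bar z^* = h_{\lambda}(\bar z^*)$, hence $e_{\lambda}(\bar z^*) = \bar z^* - h_{\lambda}(\bar z^*) = 0$.

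Next I would establish that $e_{\lambda}$ is continuously differentiable in a neighborhood of $\bar z^*$. Under Assumption~\ref{ass:monotone}, $F_{\lambda}$ is $C^1$, and under Assumption~\ref{ass:projection} the projection map $y\mapsto g_{\lambda}(y)$ is $C^1$. Since $h_{\lambda}(z) = g_{\lambda}(z - r F_{\lambda}(z))$, the chain rule gives the formula \eqref{eq:hessian} and implies that $\nabla_z e_{\lambda}$ is continuous on $\Omega_{\lambda}$. Because $\nabla_z e_{\lambda}(\bar z^*)$ is nonsingular by hypothesis, continuity of the Jacobian together with the fact that the set of invertible matrices is open yields a closed ball $\mathcal B_0(\bar z^*)$ on which $\nabla_z e_{\lambda}(z)$ remains invertible and $\vertiii{[\nabla_z e_{\lambda}(z)]^{-1}}$ is uniformly bounded by some constant $C$.

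With these ingredients in hand, I would finish by the standard Newton argument. Writing the Newton update as $z^{k+1} - \bar z^* = z^k - \bar z^* - [\nabla_z e_{\lambda}(z^k)]^{-1} e_{\lambda}(z^k)$ and using $e_{\lambda}(\bar z^*)=0$, one gets
\$
z^{k+1} - \bar z^* = [\nabla_z e_{\lambda}(z^k)]^{-1}\bigl(\nabla_z e_{\lambda}(z^k)(z^k - \bar z^*) - e_{\lambda}(z^k) + e_{\lambda}(\bar z^*)\bigr).
\$
By the mean value form of Taylor's theorem applied to the $C^1$ map $e_{\lambda}$, the right-hand side is $o(\|z^k-\bar z^*\|)$, uniformly on $\mathcal B_0(\bar z^*)$. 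Choosing a sufficiently small neighborhood $\mathcal B(\bar z^*)\subseteq\mathcal B_0(\bar z^*)$ ensures that the contraction factor is strictly less than one, so the iterates stay inside $\mathcal B(\bar z^*)$, converge to $\bar z^*$, and the ratio $\|z^{k+1}-\bar z^*\|/\|z^k-\bar z^*\|$ tends to zero, i.e., superlinear convergence.

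The main obstacle, conceptually, is not the Newton argument itself (which is routine once the hypotheses hold) but the differentiability of the projection operator hidden inside $e_{\lambda}$; this regularity is non-trivial for a general polyhedral constraint and is precisely what Assumption~\ref{ass:projection} together with \cref{pro:optnet} secures. Assuming these tools, the rest of the proof reduces to a careful application of the classical local convergence theorem for Newton's method applied to $e_{\lambda}(z)=0$.
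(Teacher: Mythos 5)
Your proposal is correct and follows essentially the same route as the paper's proof: both verify that $\bar z^*$ is a root of the $C^1$ map $e_{\lambda}$, use nonsingularity of $\nabla_z e_{\lambda}(\bar z^*)$ to get a bounded inverse nearby, and conclude via the standard Newton error recursion that the residual is $o(\|z^k-\bar z^*\|)$. The only cosmetic difference is that the paper unfolds the Taylor remainder through the composition $g_{\lambda}(z-rF_{\lambda}(z))$ term by term, whereas you apply Taylor's theorem to $e_{\lambda}$ directly, which is equivalent given the chain rule.
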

\begin{proof}
See Appendix \ref{app:newton} for a detailed proof.
\end{proof}
\vspace{0pt}
To enable global convergence, we can first employ  the projection method to get into the neighborhood of  the solution to $\vi$. See Appendix \ref{sec:newton-implementation} for a globally convergent algorithm and implementation details.

\subsection{Backward propagation}

\subsubsection{Explicit differentiation method}
\label{sec:unroll}

According to Proposition \ref{thm:convergence}, the convergence of the projection method is guaranteed by a sufficiently small $r$. Nevertheless, it is usually unnecessary to predetermine $r$. Instead, we can dynamically adjust $r^k$ at each iteration $k$ such that $r^k$ satisfies the convergence condition \eqref{thm:convergence} when $k$ is sufficiently large. Then the sequence generated by $z^{k + 1} = h_{\lambda}(z^k)$ with $r^k$ converges to the solution to $\vi$.  
\begin{figure}[H]
    \begin{minipage}[b]{0.67\textwidth}
    \centering
    \includegraphics[width=0.8\textwidth]{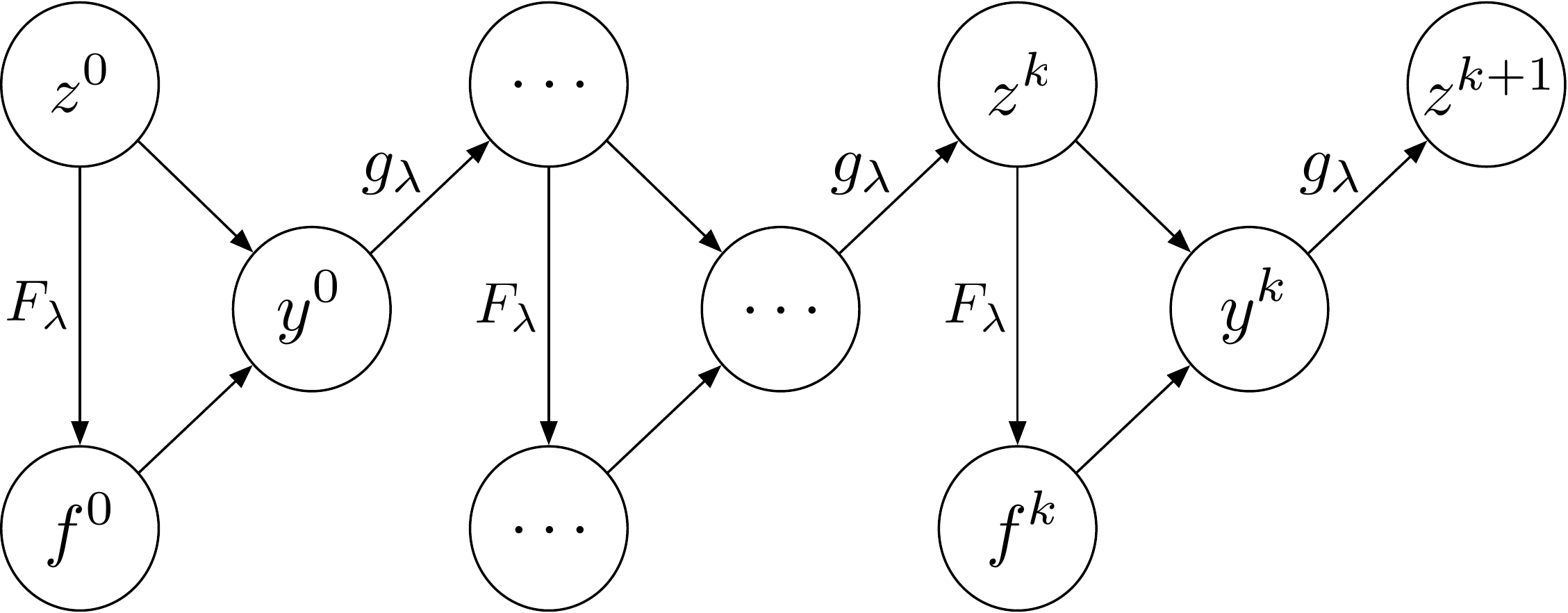}
    \captionof{figure}{Explicit method.}
    \label{fig:unroll}
    \end{minipage}
    \begin{minipage}[b]{0.27\textwidth}
    \centering
    \includegraphics[width=0.8\textwidth]{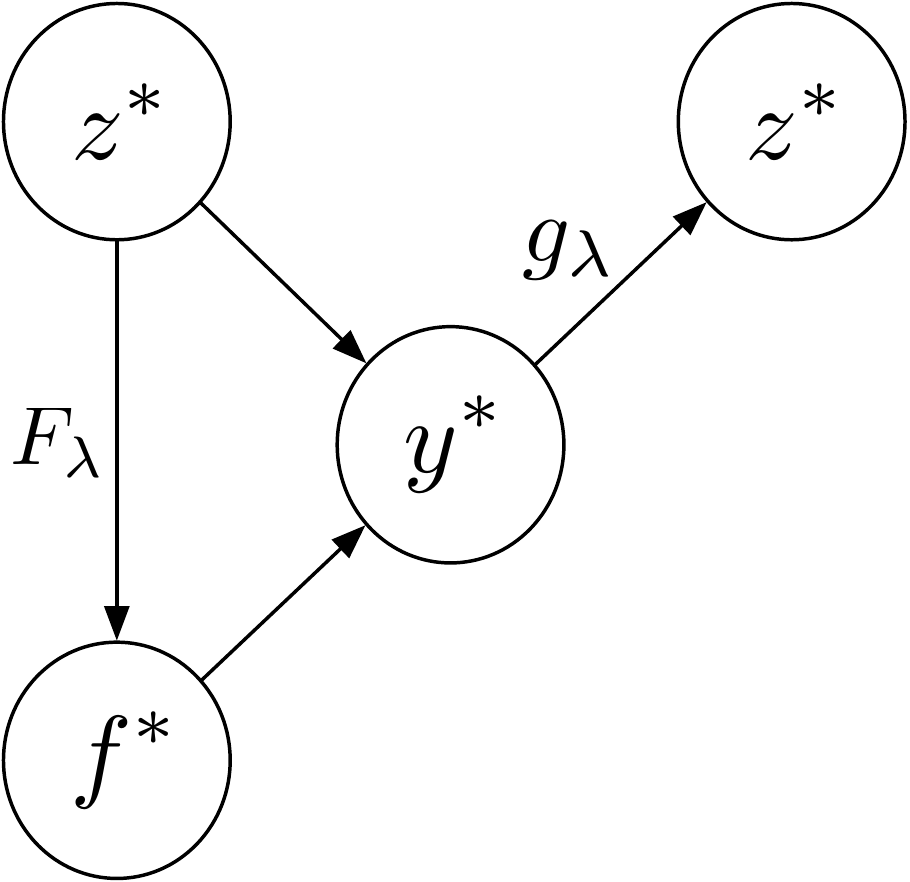}
    \captionof{figure}{Implicit method.}
    \label{fig:implicit}
    \end{minipage}
\end{figure}
The computation graph corresponding to this method is given in Figure \ref{fig:unroll}, where $y^k = z^k - r^k \cdot f^k$ and $f^k = F_{\lambda}(z^k)$. Based on this graph, by viewing the solver of $\vi$ as a collection of an infinite number of projection layers, we are ready to present our first method to differentiate through a VI problem.
First, taking the differentials on both sides of $z^{k + 1} = h_{\lambda}(z^k)$ with respect to $\lambda$ gives
\begin{equation}
    \frac{\partial z^{k+1}}{\partial \lambda}
    = \nabla_{z} h_{\lambda}(z^{k}) \cdot \frac{\partial z^k}{\partial \lambda} + \nabla_{\lambda} h_{\lambda}(z^k),
    \label{eq:jacobian}
\end{equation}
where
\begin{align}
    &\nabla_{z} h_{\lambda}(z^{k}) = \nabla_{y} g_{\lambda}(y^{k}) \cdot \left(I - r^k \cdot \nabla_{z} F_{\lambda}(z^{k})\right), \label{eq:hz}\\
    &\nabla_{\lambda} h_{\lambda}(z^{k}) = \nabla_{\lambda} g_{\lambda}(y^k) - r^k \cdot \nabla_{y} g_{\lambda}(y^k) \cdot \nabla_{\lambda} F_{\lambda}(z^k). \label{eq:hl}
\end{align}
Based on the recursive equation \eqref{eq:jacobian}, the Jacobian matrix $\partial z^*/\partial \lambda$ can be explicitly derived as the limit of $\partial z^k/\partial \lambda$.
At the core of this method is the computation of $\nabla_{y} g_{\lambda}(y^{k})$ and $\nabla_{\lambda} g_{\lambda}(y^k)$, which can still be obtained by Proposition \ref{pro:optnet}. In the backward propagation, however, it is not necessary to explicitly form $\nabla_{y} h_{\lambda}(z^{k})$ and $\nabla_{\lambda} h_{\lambda}(z^{k})$. Instead, the Python library \texttt{cvxpylayers}\footnote{\url{https://github.com/cvxgrp/cvxpylayers}} can be used to explicitly build the computation graph with projection (QP) layers and enable backward propagation. See \citep{agrawal2019differentiable} for more details on the package.

\vspace{10pt}
\subsubsection{Implicit differentiation method}
\label{sec:implicit}

If $F_{\lambda}$ is strictly monotone for all $\lambda \in \Lambda$, then the solution to $\vi$ is unique. In this case, the function $z^*(\lambda)$ is  directly defined by the fixed-point equation \eqref{eq:newton-fixed}, and thus, implicit differentiation can be used  to derive $\partial z^*/\partial \lambda$. We first give the sufficient conditions for differentiation. 
\begin{proposition}[Conditions for differentiation \citep{tobin1986sensitivity}]
\label{thm:dif}
Suppose that $\vi$ admits a solution $\bar z^*$ at $\bar \lambda$. If $F_{\lambda}(z)$ is strongly monotone in a neighborhood $\mathcal B(\bar z^*)$ of $\bar z^*$, then in a a neighborhood $\mathcal{B}(\bar \lambda)$ of $\bar \lambda$, for each $\lambda \in \mathcal{B}(\bar \lambda)$, $z^*$ can uniquely defined from the fix-point equation $z^* - h_{\lambda}(z^*) = 0$, and the function $z^*(\lambda)$ defined in this way is differentiable.
\end{proposition}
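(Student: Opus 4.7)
The plan is to reduce the proposition to a direct application of the Implicit Function Theorem (IFT) to the fixed-point equation $e_\lambda(z) = z - h_\lambda(z) = 0$ from \eqref{eq:newton-fixed}. By Proposition~\ref{thm:opt}, this equation characterizes solutions of $\vi$ for any $r > 0$; and by Assumptions~\ref{ass:monotone} and \ref{ass:projection}, the map $(z,\lambda) \mapsto e_\lambda(z)$ is $C^1$ jointly in its arguments on a neighborhood of $(\bar z^*, \bar\lambda)$. The entire task therefore reduces to verifying that the square Jacobian $\nabla_z e_{\bar\lambda}(\bar z^*) = I - \nabla_z h_{\bar\lambda}(\bar z^*)$ is nonsingular; the IFT will then produce a $C^1$ map $\lambda \mapsto z^*(\lambda)$ on some neighborhood $\mathcal{B}(\bar\lambda)$ of $\bar\lambda$ satisfying $z^*(\bar\lambda)=\bar z^*$ and $z^*(\lambda) = h_\lambda(z^*(\lambda))$, along with the local uniqueness claimed in the proposition.

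For the nonsingularity step, I would exploit the freedom in choosing the positive scalar $r$ in the fixed-point formulation to make $h_{\bar\lambda}$ a strict contraction locally at $\bar z^*$. Since $F_{\bar\lambda}$ is $C^1$ and strongly monotone with some modulus $\mu > 0$ on a neighborhood of $\bar z^*$, and is therefore locally Lipschitz with some constant $L$, the standard expansion
\[
\|(z_1 - rF_{\bar\lambda}(z_1)) - (z_2 - rF_{\bar\lambda}(z_2))\|^2 \leq (1 - 2r\mu + r^2 L^2)\,\|z_1-z_2\|^2
\]
shows that any choice $r \in (0, 2\mu/L^2)$ makes $z \mapsto z - rF_{\bar\lambda}(z)$ a local contraction. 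Composing with the non-expansive projection $\mathcal{P}_{\Omega_{\bar\lambda}}$ preserves contractivity, so $\vertiii{\nabla_z h_{\bar\lambda}(\bar z^*)} < 1$, and consequently $I - \nabla_z h_{\bar\lambda}(\bar z^*)$ is invertible, as required by the IFT.

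The main obstacle, I expect, is not a single heavy calculation but the reconciliation of the various local hypotheses: strong monotonicity is assumed only on $\mathcal{B}(\bar z^*)$, the projection's $C^1$ regularity is local by Assumption~\ref{ass:projection}, and the contraction estimate must be carried out on a common neighborhood on which all of these properties hold simultaneously. Because the IFT is itself a local statement and $r$ is at our disposal, this can be handled by shrinking the neighborhood as needed and fixing a single $r$ once and for all before invoking the IFT; the ``for any $r > 0$'' clause of Proposition~\ref{thm:opt} guarantees that the fixed-point characterization of $\vi$ is unaffected by this choice, so the $z^*(\lambda)$ produced by the IFT is genuinely the VI solution and inherits the $C^1$-dependence on $\lambda$ claimed in the statement.
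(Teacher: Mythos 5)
Your argument is correct, and it is worth noting that the paper itself offers no proof of Proposition~\ref{thm:dif}: the result is imported from \citet{tobin1986sensitivity}, whose original derivation works on the KKT system of the VI and applies the implicit function theorem to that system under local strong monotonicity together with linear independence of the active constraint gradients and strict complementarity. Your route---applying the IFT directly to the fixed-point residual $e_\lambda(z)=z-h_\lambda(z)$ of \eqref{eq:newton-fixed} and certifying nonsingularity of $I-\nabla_z h_{\bar\lambda}(\bar z^*)$ via the contraction estimate $1-2r\mu+r^2L^2<1$ for $r\in(0,2\mu/L^2)$ plus non-expansiveness of the projection---is the natural proof inside this paper's framework, since Assumption~\ref{ass:projection} already grants joint $C^1$ regularity of $(y,\lambda)\mapsto g_\lambda(y)$ and Proposition~\ref{thm:opt} guarantees that fixing one convenient $r$ does not change the solution set being differentiated. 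The contraction-to-Jacobian step is sound: a map that is $q$-Lipschitz near $\bar z^*$ with $q<1$ and differentiable there has $\vertiii{\nabla_z h_{\bar\lambda}(\bar z^*)}\le q<1$, so the Neumann series gives invertibility. Two caveats you should make explicit. First, Assumption~\ref{ass:projection} is carrying the real weight: the Euclidean projection onto a polyhedron is in general only piecewise affine, so its assumed smoothness is essentially Tobin's strict-complementarity and constraint-qualification hypotheses in disguise, and your proof is exactly as strong as that assumption. Second, the uniqueness delivered by the IFT is local in $z$ (a unique zero of $e_\lambda$ in a neighborhood of $\bar z^*$), which is the correct reading of ``uniquely defined'' here because strong monotonicity is only assumed on $\mathcal B(\bar z^*)$; you should state this rather than suggesting the fixed-point equation has a globally unique solution for each $\lambda$.
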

Under the differentiation conditions, we are ready to present our second method to perform backward propagation through a VI layer. 
\begin{proposition}[Implicit backward propagation]
\label{thm:implicit}
Under the conditions in Proposition \ref{thm:dif}, we have
\begin{equation}
    \frac{\partial z^*}{\partial \lambda} =  \left[I - \nabla_{z} h_{\lambda}(z^*)\right]^{-1} \cdot \nabla_{\lambda} h_{\lambda}(z^*)
    \label{eq:implicit}.
\end{equation}
\end{proposition}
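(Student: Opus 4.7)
The plan is to apply the implicit function theorem to the fixed-point equation $z^* - h_\lambda(z^*) = 0$, which under the hypotheses of Proposition \ref{thm:dif} implicitly defines $z^*$ as a differentiable function of $\lambda$ in a neighborhood of $\bar{\lambda}$. Starting from the identity $z^*(\lambda) = h_\lambda(z^*(\lambda))$ and differentiating both sides with respect to $\lambda$ via the chain rule, I obtain
\#
\frac{\partial z^*}{\partial \lambda} = \nabla_z h_\lambda(z^*) \cdot \frac{\partial z^*}{\partial \lambda} + \nabla_\lambda h_\lambda(z^*),
\#
which rearranges to $[I - \nabla_z h_\lambda(z^*)] \cdot \partial z^*/\partial \lambda = \nabla_\lambda h_\lambda(z^*)$. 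Provided $I - \nabla_z h_\lambda(z^*)$ is invertible, the claimed formula follows by left-multiplying both sides by its inverse.

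The main work, and the only real obstacle, is establishing that $I - \nabla_z h_\lambda(z^*)$ is nonsingular under strong monotonicity of $F_\lambda$. The idea is that strong monotonicity makes the projection map $h_\lambda$ a local contraction for sufficiently small $r$, which forces the operator norm $\vertiii{\nabla_z h_\lambda(z^*)}$ to be strictly less than one. Concretely, using \eqref{eq:hz}, write $\nabla_z h_\lambda(z^*) = \nabla_y g_\lambda(y^*) \cdot (I - r \nabla_z F_\lambda(z^*))$ with $y^* = z^* - r F_\lambda(z^*)$. Since $g_\lambda$ is the Euclidean projection onto the convex polyhedron $\Omega_\lambda$, the projection is nonexpansive, so $\vertiii{\nabla_y g_\lambda(y^*)} \le 1$. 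Strong monotonicity of $F_\lambda$ with modulus $\alpha>0$ together with Lipschitz continuity of $\nabla_z F_\lambda$ (from Assumption \ref{ass:monotone} on a bounded $\Omega_\lambda$) yields $\vertiii{I - r \nabla_z F_\lambda(z^*)} < 1$ for $r$ small enough, by a standard quadratic-bound argument. Hence $\vertiii{\nabla_z h_\lambda(z^*)} < 1$, and the Neumann series shows $I - \nabla_z h_\lambda(z^*)$ is invertible.

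Since the fixed-point characterization of Proposition \ref{thm:opt} holds for every $r>0$, and the left-hand side $\partial z^*/\partial \lambda$ does not depend on the choice of $r$, it suffices to establish invertibility for one valid value of $r$; I will select $r$ small enough that the contraction argument above applies. After that the conclusion is immediate from the rearranged chain-rule identity, and the identification of $\nabla_z h_\lambda$ and $\nabla_\lambda h_\lambda$ with the expressions in \eqref{eq:hz}--\eqref{eq:hl} (which rely on Assumption \ref{ass:projection} to justify differentiating through $g_\lambda$) completes the argument. The only subtlety to be careful about is that the differentiation of the composite map $h_\lambda(z) = g_\lambda(z - r F_\lambda(z))$ with respect to both arguments genuinely requires Assumption \ref{ass:projection}; without it, the projection could fail to be differentiable at $y^*$ (e.g., on the boundary of the active set), breaking the chain rule step.
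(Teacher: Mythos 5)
Your proposal is correct and follows essentially the same route the paper takes: the paper states Proposition \ref{thm:implicit} without a separate proof, but the identical manipulation---differentiating the fixed-point identity $z^*(\lambda)=h_\lambda(z^*(\lambda))$ and rearranging to $[I-\nabla_z h_\lambda(z^*)]\,\partial z^*/\partial\lambda=\nabla_\lambda h_\lambda(z^*)$---appears verbatim in the proof of Theorem \ref{thm:unrolling} in Appendix \ref{app:unrolling}. Your added justification that $I-\nabla_z h_\lambda(z^*)$ is invertible (nonexpansiveness of the projection plus $\vertiii{I-r\nabla_z F_\lambda(z^*)}<1$ for small $r$ under strong monotonicity) is a welcome supplement the paper leaves implicit; the only nitpick is that the quadratic bound needs boundedness of $\nabla_z F_\lambda$ on the compact $\Omega_\lambda$ (i.e., Lipschitz continuity of $F_\lambda$), not Lipschitz continuity of $\nabla_z F_\lambda$ as you wrote.
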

If we want to use \eqref{eq:implicit} to compute $\partial z^*/\partial \lambda$, then we need to differentiate through the mapping $h_{\lambda}(z^*)$. The computation graph of this method is shown in Figure \ref{fig:implicit}, where $y^* = z^* - r \cdot f^*$ and $f^* = F_{\lambda}(z^*)$. As $z^*$ is already the solution to $\vi$, $r$ can be any positive value.  The Jacobian matrix $\nabla_{z} h_{\lambda} (z^*)$ and $\nabla_{\lambda} h_{ \lambda}(z^*)$ can be obtained using the same method as in the 
explicit method.

\subsubsection{Discussion}

The following theorem establishes the equivalence of the explicit  differentiation method and the implicit differentiation method under the strongly monotone condition of $F_{\lambda}$.
\begin{theorem}[Equivalence of the two backward propagation methods]
\label{thm:unrolling}
Suppose that $\vi$ admits a solution $\bar z^*$ at $\bar \lambda$. Denoting $h_{\lambda}^{(k)}(\cdot)$ as the $k$th composition of $h_{\lambda}(\cdot)$, if $r$ satisfying the convergence condition \eqref{eq:conv-condition} for $\bar \lambda$, then for each $\lambda$ in a neighborhood $\mathcal{B}(\bar \lambda)$ of $\bar \lambda$, the function 
$
    z^*(\lambda) =
    \lim_{k \to \infty}  h_{\lambda}^{(k)}(z^0)
$
is well-defined and is the solution to $\vi$. If $F_{\lambda}(z)$ is strongly monotone in a neighborhood $\mathcal B(\bar z^*)$ of $\bar z^*$, the function $z^*(\lambda)$ is differentiable at $\bar \lambda$. Starting from $\partial \bar z^0(\bar \lambda) / \partial \lambda = 0$,
the sequence $\partial \bar z^k(\bar \lambda) / \partial \lambda$ defined by \eqref{eq:jacobian} converges to $\nabla_{\lambda} z^*(\bar \lambda)$.
\end{theorem}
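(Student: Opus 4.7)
The plan is to decompose the theorem into three claims and handle them in sequence: (i) well-definedness of the iterate limit $z^*(\lambda)$ in a neighborhood of $\bar\lambda$; (ii) differentiability of $z^*(\lambda)$ at $\bar\lambda$, which is essentially a citation of Proposition \ref{thm:dif}; and (iii) convergence of the explicitly unrolled Jacobians to the implicit Jacobian from Proposition \ref{thm:implicit}.

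For (i), I would exploit that the projection operator is non-expansive in the Euclidean norm, so $\vertiii{\nabla_y g_\lambda(y)} \le 1$. Combined with the convergence condition \eqref{eq:conv-condition} at $\bar\lambda$, the expression \eqref{eq:hz} for $\nabla_z h_\lambda(z)$ shows $\vertiii{\nabla_z h_{\bar\lambda}(z)} < 1$ uniformly on the compact set $\Omega_{\bar\lambda}$. By continuity of $\nabla_z F_\lambda$ in both arguments (Assumption \ref{ass:monotone}) and of $\nabla_y g_\lambda$ (Assumption \ref{ass:projection}), this strict contraction property persists for all $\lambda$ in some neighborhood $\mathcal B(\bar\lambda)$ with a uniform modulus $\rho < 1$. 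The Banach fixed-point theorem then supplies a unique fixed point $z^*(\lambda)$ to which $h_\lambda^{(k)}(z^0)$ converges, and by Proposition \ref{thm:opt} this fixed point solves $\vi$. Claim (ii) follows directly from Proposition \ref{thm:dif}, whose hypotheses (strong monotonicity of $F_\lambda$ near $\bar z^*$) are precisely those assumed.

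For (iii), denote $J^k = \partial \bar z^k(\bar\lambda)/\partial\lambda$, $A^k = \nabla_z h_{\bar\lambda}(\bar z^k)$, and $B^k = \nabla_\lambda h_{\bar\lambda}(\bar z^k)$, so \eqref{eq:jacobian} reads $J^{k+1} = A^k J^k + B^k$ with $J^0 = 0$. Since $\bar z^k \to \bar z^*$ by (i), and $h_\lambda$ is $C^1$ in $(z,\lambda)$ by Assumptions \ref{ass:monotone} and \ref{ass:projection}, we have $A^k \to A^* := \nabla_z h_{\bar\lambda}(\bar z^*)$ and $B^k \to B^* := \nabla_\lambda h_{\bar\lambda}(\bar z^*)$. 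Define the candidate limit $J^* = (I-A^*)^{-1} B^*$, which is well-defined because $\vertiii{A^*} < 1$, and which equals $\partial z^*(\bar\lambda)/\partial\lambda$ by Proposition \ref{thm:implicit}. Setting $E^k = J^k - J^*$ and using $J^* = A^* J^* + B^*$, one computes
$$E^{k+1} = A^k E^k + (A^k - A^*) J^* + (B^k - B^*).$$
Picking any $\rho' \in (\vertiii{A^*},1)$, continuity gives $\vertiii{A^k} \le \rho'$ for all $k$ large enough; the perturbation term $\varepsilon^k := \vertiii{(A^k-A^*)J^*} + \vertiii{B^k - B^*}$ tends to $0$.

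The main obstacle is the last step: showing the perturbed linear recursion $\vertiii{E^{k+1}} \le \rho' \vertiii{E^k} + \varepsilon^k$ with $\varepsilon^k \to 0$ forces $\vertiii{E^k} \to 0$. I would handle it with a standard two-term bound: for any $\epsilon > 0$, choose $K$ large enough that $\varepsilon^k < (1-\rho')\epsilon/2$ and $\vertiii{A^k} \le \rho'$ for $k \ge K$; then unroll the recursion from $K$ to obtain $\vertiii{E^k} \le (\rho')^{k-K} \vertiii{E^K} + \sum_{j=K}^{k-1} (\rho')^{k-1-j} \varepsilon^j$, which is eventually bounded by $\epsilon$. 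Combined with Proposition \ref{thm:implicit}, this yields $J^k \to \partial z^*(\bar\lambda)/\partial\lambda$, completing the proof.
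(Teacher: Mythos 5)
Your proposal is correct, and its overall decomposition (well-definedness via a uniform contraction argument, differentiability via Proposition \ref{thm:dif}, then convergence of the unrolled Jacobians to the implicit one) mirrors the paper's proof in Appendix \ref{app:unrolling}. The substantive difference is in the final step. The paper simply sets $\lambda = \bar\lambda$ in the recursion \eqref{eq:jacobian}, ``lets $k \to \infty$,'' and reads off the fixed-point equation $(I - \nabla_z h_{\lambda}(z^*))\,\partial z^*/\partial\lambda = \nabla_\lambda h_\lambda(z^*)$; this implicitly presupposes that the sequence $\partial \bar z^k/\partial\lambda$ converges, which is exactly the thing to be proved. You instead write the recursion as $J^{k+1} = A^k J^k + B^k$, identify the candidate limit $J^* = (I-A^*)^{-1}B^*$ (well-defined since $\vertiii{A^*} \le \vertiii{\nabla_y g_{\bar\lambda}(\bar y)}\cdot\vertiii{I - r\nabla_z F_{\bar\lambda}(\bar z^*)} < 1$ by non-expansiveness of the projection), and run the perturbed linear recursion $E^{k+1} = A^k E^k + (A^k - A^*)J^* + (B^k - B^*)$ with the standard split-the-tail estimate. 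This closes a genuine gap in the paper's argument and is the part of your write-up that adds value; the trade-off is only that you re-derive the contraction property underlying Proposition \ref{thm:convergence} rather than citing it, which lengthens part (i) slightly. One small point to keep an eye on in part (i): the uniformity of the modulus $\rho < 1$ over $\lambda$ near $\bar\lambda$ requires the bound to hold over the $\lambda$-dependent sets $\Omega_\lambda$; the paper asserts the same thing without further comment, so you are at least at parity there, but a sentence invoking compactness and the continuity of $(y,\lambda)\mapsto g_\lambda(y)$ and $(z,\lambda)\mapsto \nabla_z F_\lambda(z)$ jointly would make it airtight.
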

\begin{proof}
See Appendix \ref{app:unrolling} for a detailed proof.
\end{proof}

\paragraph{Comparison.} As shown in Figures \ref{fig:unroll} and  \ref{fig:implicit}, the computation graph of the implicit method has a more efficient ``depth''. Therefore, when $F_{\lambda}$ is strongly monotone, the implicit method is more efficient than the explicit method. However, even though the uniqueness of system-level state (i.e., $\bar p$ in the general formulation \eqref{eq:framework}) can be guaranteed under weak conditions, the agent-level strategy (i.e., $\bar z$ in \eqref{eq:framework}) is not necessarily unique.  It is well-known that this situation may arise in population games where $F_{\lambda}$ is monotone but not strongly monotone (hence the term $I - \nabla_z h_{\lambda}(z^*)$ in implicit differentiation becomes singular).   To deal with this problem, \citet{tobin1988sensitivity} proposed to differentiate  by constructing a specific solution that is a nondegenerate  extreme point of the solution set. However, getting such a solution is an extra burden in the context end-to-end optimization, because it is not directly available.  The explicit method devised in our paper works in a similar manner by differentiating a specific solution $z^*$ generated by the projection method. Yet, since finding $z^*$ is an integrated part of the solution algorithm, no extra effort is needed.

\paragraph{Extension.} The polyhedral assumption on $\Omega_{\lambda}$ is sufficient for various equilibrium problems. Nevertheless, the explicit and implicit differentiation methods as well as the Newton's method can still work as long as $\Omega_{\lambda}$ is convex. In this case, the method proposed in \citep{agrawal2019differentiating, agrawal2019differentiable} can be used for differentiating through the projection, which is equivalent to a convex program.

\section{Numerical experiments}
\label{sec:numerical}

\subsection{Braess's paradox}

\begin{figure}[H]
\centering
\tikzset{every picture/.style={line width=0.75pt}}
    \begin{minipage}[b]{0.44\textwidth}
    \includegraphics[width=0.7\textwidth]{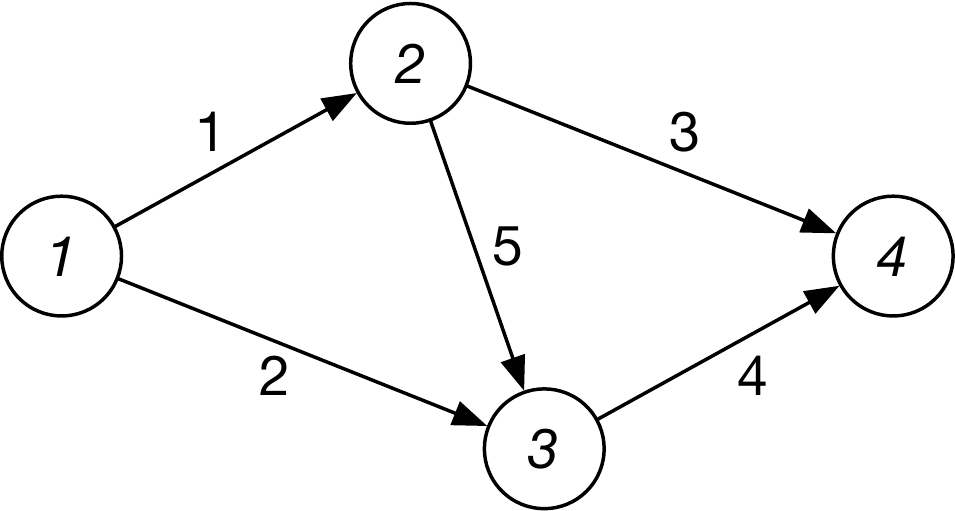}
    \vspace{0pt}
    \captionof{figure}{Braess network and its \\ edge parameters.}
    \label{fig:braess}
    \end{minipage}
    \begin{minipage}[b]{0.43\textwidth}
    \centering
    \includegraphics[width=0.8\textwidth]{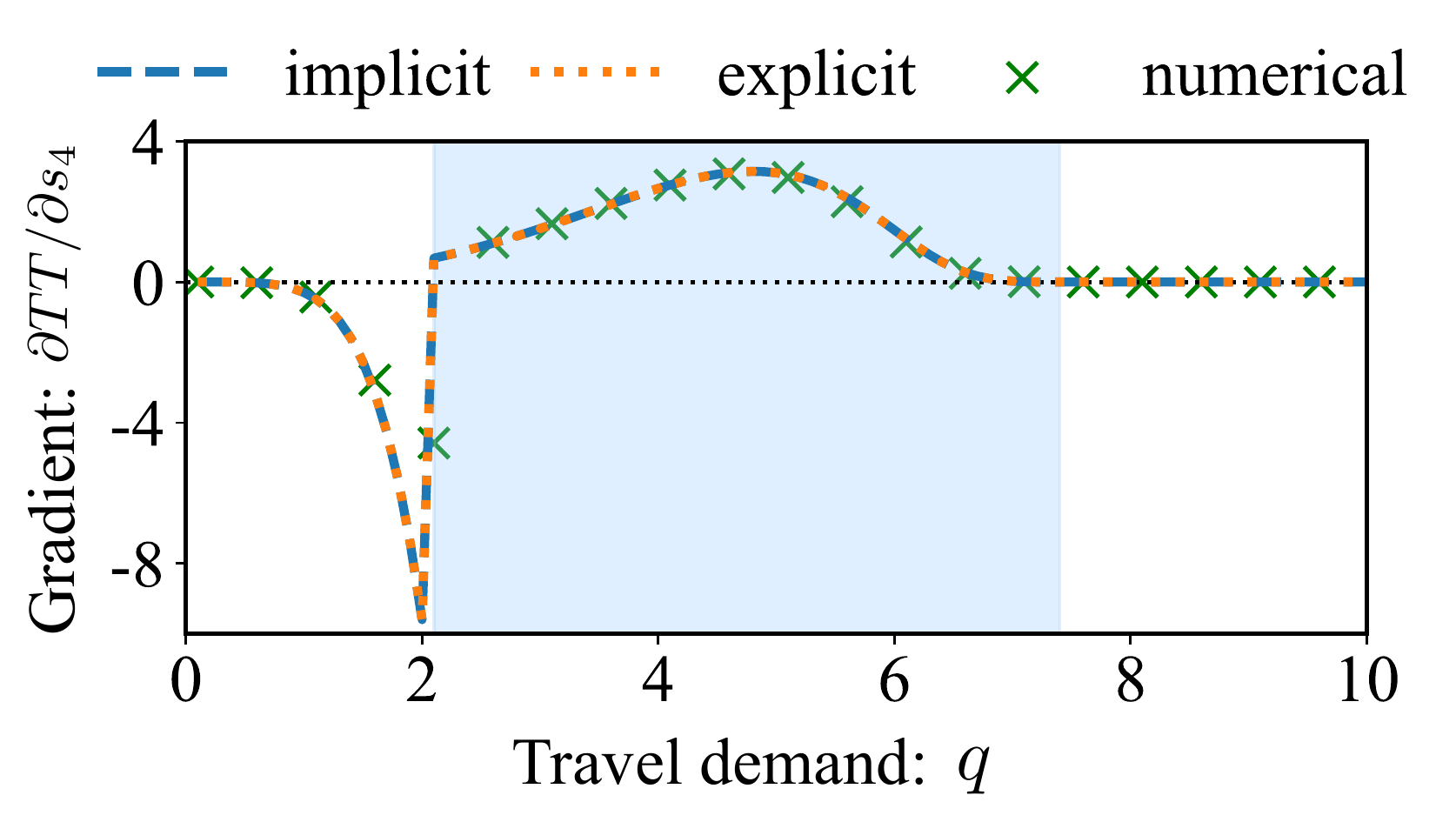}
    \vspace{-5pt}
    \captionof{figure}{Gradient of total travel time with respect to the capacity on edge 5.}
    \label{fig:gradient}
    \end{minipage}
\end{figure}

Using the network shown in Figure \ref{fig:braess}, \citet{braess1968paradoxon} demonstrates that expanding the capacity of edge 5 would increase the total travel time at the Wardrop's equilibrium.
Assume that the travel demand from node 1 to 4 is $q$ and the travel time on each edge is given by the BPR function $t_e(x_e) = T_e \cdot \left(1 + 0.15 \cdot (x_e / s_e) ^ 4 \right)$, where $T_e$ is the free-flow travel time and $s_e$ is the capacity. Figure \ref{fig:gradient} shows the gradient of the total travel time $TT = \sum_{e = 1}^5 t_e(x_e^*) \cdot x_e^*$ at the equilibrium traffic assignment $x_e^*$ with respect to $s_5$ under different travel demand $q$, using the implicit, the explicit and the numerical method, respectively. The three methods produce the same results, confirming the recent finding   \citep{colini2020selfish} that Braess's paradox exists when travel demand is 
neither too low (little congestion) nor too high (too much congestion). For more details please refer to Appendix \ref{sec:app-1}.

\subsection{Transportation system operation in a learning-to-design manner}
\label{sec:learning-to-design}

We then test our methods on a linear city model (Figure \ref{fig:linear}), where each node represents a business or a residential area. The nodes are linked by roads (driving) and also supported by public transport services (riding). Citizens travel between nodes everyday and we model the choices of citizens using the routing game. To consider the choice of modes and routes simultaneously, we split each node into 4 sub-nodes, namely the starting node ``s'', the ending node ``e'', the driving node ``v'', and the riding node ``p''.
We model the travel costs on driving and riding edges using function \eqref{eq:cost} in Example \ref{eg:routing} and a constant cost on starting edges for public transport, e.g. 1s $\to$ 1p, to represent the waiting time.

\begin{figure}[H]
\centering
\tikzset{every picture/.style={line width=0.75pt}}
    \begin{minipage}[b]{0.32\textwidth}
    \centering
    \includegraphics[width=0.85\textwidth]{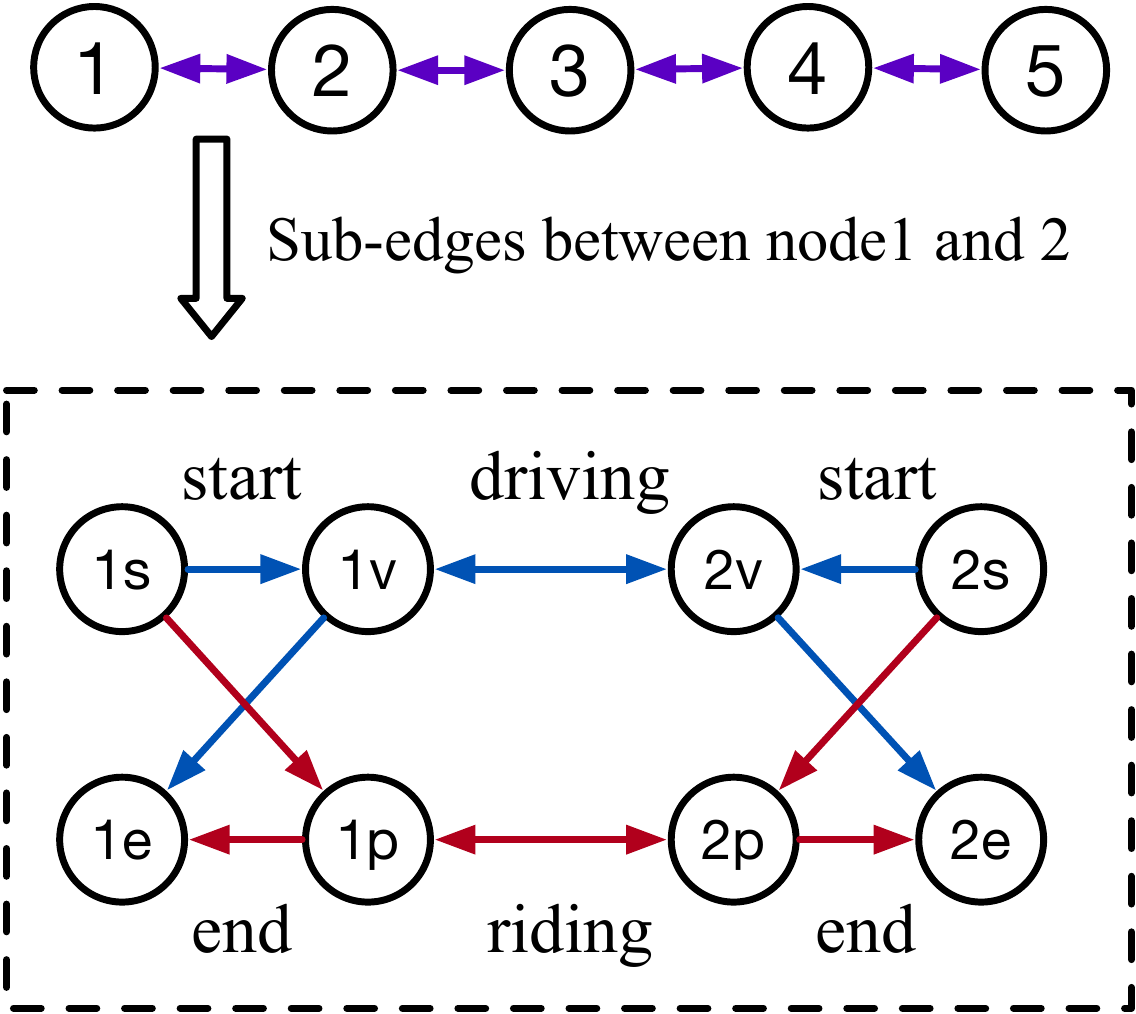}
    \vspace{12pt}
    \captionof{figure}{A linear city.}
    \label{fig:linear}
    \end{minipage}
    \begin{minipage}[b]{0.6\textwidth}
    \centering
    \includegraphics[width=0.87\textwidth]{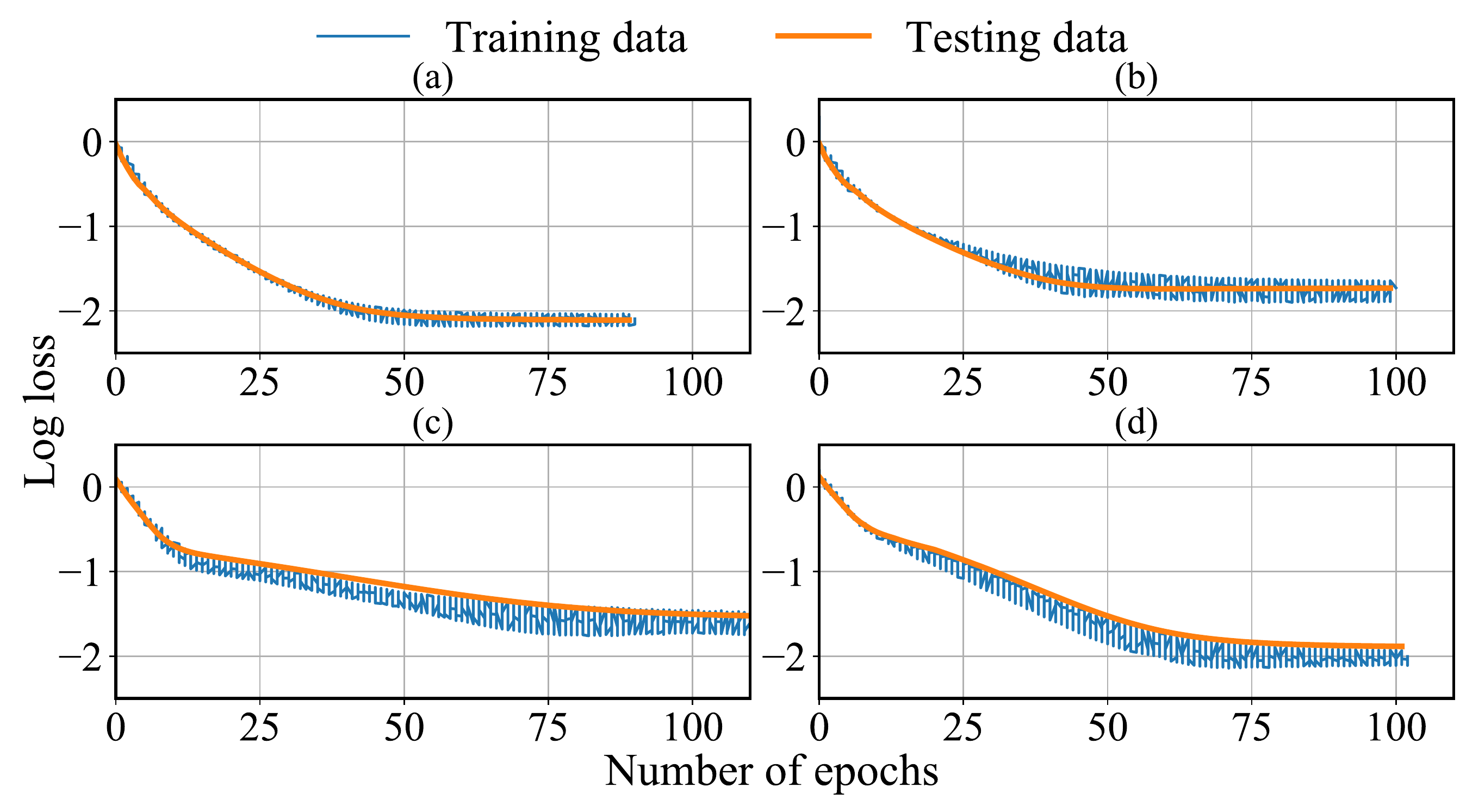}
    \captionof{figure}{Training process for learning game parameters}
    \label{fig:training}
    \end{minipage}
\end{figure}

\paragraph{Learning.} We assume that each riding edge $e$ has a hidden cost $\beta_e(x_e) = \tau \cdot (x_e / q_e)^2$, which indicates the discomfort caused by crowdedness. We first learn $\tau$, $q_e$ and also $\gamma$ (time value of money) based on ``observed'' traffic flows on the driving and riding edges. We randomly generated $N$ source-sink demand matrices, representing the travel demand in $N$ different periods. We use the true cost functions to generate observations by finding equilibrium traffic flows, and round them to the nearest 0.1.  The model is trained using the stochastic gradient decent method.  Figure \ref{fig:training} shows the training process under 4 different hyperparameters settings.  We report the losses on log scale for better visualization. For details please refer to Appendix \ref{sec:app-2}.

\paragraph{Intervention.} We further study how to regulate the transportation system based on the learned cost functions. To encourage transit ridership---widely promoted for sustainability---we consider imposing a congestion toll on driving edges. Specifically, on each riding edge $e$, we design the toll $\pi_e$ to minimize the total travel time, subject to a constraint that bounds the total cost of crowdedness. We solve the toll optimization problem using the steepest descent method with the Armijo-type line search method. First, as a benchmark, when the true cost functions are used, the total travel time is reduced by $10.21\%$ whereas the total crowdeness cost increases by 15\% (the preset upper bound). Then, using the cost functions obtained under one of the the four learning modes (a-d in Figure  \ref{fig:training}), we find the optimal design and test its performance. The travel time savings and the extra cost of crowdedness for the four learning modes are respectively: (a) $11.05\%$, $16.47\%$, (b) $25.79\%$, $54.27\%$, (c) $31.38\%$, 261.48\%, (d) $8.38\%$, 11.95\%. For details please refer to Appendix \ref{sec:app-2}.

\section{Summary}

Over the past decade, artificial intelligence and machine learning have become an increasingly prominent toolbox for understanding social systems \citep{leibo2017multi, jaques2019social, mguni2019coordinating, zheng2020ai, shi2020artificial}.  Our work adds into this toolbox a general representation of equilibria of games, and mathematics required to perform forward and backward propagation through it.  We hope this work will draw more attention to further developments and applications that could contribute to sustainable development of our shared society.

\section*{Broader Impact}
Our work helps understand and resolve social dilemmas resulting from pervasive conflict between self and collective interest in human societies. The potential applications of the proposed modeling framework range from addressing externality in economic systems to guiding large-scale infrastructure investment.  Planners, regulators, policy makers of various human systems could benefit from the decision making tools derived from this work.

\section*{Acknowledgments}
The authors thank Prof. Liping Zhang, Ruzhang Zhao, Qianni Wang and Lingxiao Wang for providing useful materials and enlightening discussions throughout this
project. The research was supported by the US National Science Foundation under the award number CMMI 1922665.

\bibliographystyle{apalike}
\bibliography{refs}

\newpage
\appendix
\numberwithin{equation}{section}
\numberwithin{figure}{section}
\numberwithin{example}{section}
\numberwithin{table}{section}
\numberwithin{proposition}{section}
\numberwithin{definition}{section}

\section{Existence and uniqueness conditions for VI}
\label{app:monotone}

We first give the definitions of monotonicity, strictly monotonicity and strongly monotonicity.
\begin{definition}[Monotone and strictly monotone]
A function $F(z)$ is monotone on $\Omega$ if
\begin{equation}
    \left<F(z_1) - F(z_2), z_1 - z_2\right> \geq 0, \quad \text{for all}~z_1, z_2 \in \Omega,
    \label{eq:monotone}
\end{equation}
and strictly monotone if the inequality above is strict.
\end{definition}

\begin{definition}[Strongly monotone]
A function $F(z)$ is strongly monotone on $\Omega$ if for some $\alpha > 0$,
\begin{equation}
    \left<F(z_1) - F(z_2), z_1 - z_2\right> \geq \alpha \|z_1 - z_2 \|^2, \quad \text{for all}~z_1, z_2 \in \Omega.
    \label{eq:strong-monotone}
\end{equation}
\end{definition}

We then give the definitions of positive semi-definiteness, positive definiteness and strongly positive definiteness.
\begin{definition}[Positive semi-definite and definite]
A square matrix $A \in \mathbb R^{n \times n}$ is positive semi-definite if
\begin{equation}
    v^{\T} A v \geq 0, \quad \text{for all}~v \in \mathbb R^n, v \neq 0,
\end{equation}
and positive definite if the inequality above is strict.
\end{definition}
\begin{definition}[Strongly positive definite]
A square matrix $A \in \mathbb R^{n \times n}$ is strongly positive definite if for some $\alpha > 0$,
\begin{equation}
    v^{\T} A v \geq \alpha \|v\|^2, \quad \text{for all}~v \in \mathbb R^n.
\end{equation}
\end{definition}

The following propositions can be used to check whether a matrix is strongly positive definite.
\begin{proposition}
A square matrix $A$ is strongly positive definite if and only if $A^{\T} + A$ is positive definite.
\end{proposition}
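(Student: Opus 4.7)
The plan is to use two simple ingredients: the identity $v^{\T} A v = \tfrac{1}{2} v^{\T}(A^{\T}+A) v$ (since $v^{\T} A v$ is a scalar, it equals its own transpose $v^{\T} A^{\T} v$), and a compactness argument on the unit sphere to convert strict positivity into a uniform lower bound. Since the proposition is an ``if and only if'', I would split the proof into two directions.

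For the forward direction (strongly positive definite $\Rightarrow$ $A^{\T}+A$ positive definite), I would start from the assumption that $v^{\T} A v \geq \alpha \|v\|^2$ for some $\alpha > 0$ and all $v$. Applying the identity above, $v^{\T}(A^{\T}+A)v = 2 v^{\T} A v \geq 2\alpha \|v\|^2$, which is strictly positive for $v \neq 0$. This direction is essentially immediate once the identity is noted.

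For the reverse direction ($A^{\T}+A$ positive definite $\Rightarrow$ strongly positive definite), suppose $v^{\T}(A^{\T}+A)v > 0$ for every $v \neq 0$. The goal is to produce a uniform constant $\alpha > 0$. Define $\phi(v) = v^{\T} A v$ and restrict attention to the unit sphere $S = \{v \in \mathbb{R}^n : \|v\|=1\}$. The function $\phi$ is continuous and $S$ is compact, so $\phi$ attains a minimum value $\alpha$ on $S$; by hypothesis (using the identity to rewrite the assumption as $\phi(v) > 0$ for $v \neq 0$), this minimum satisfies $\alpha > 0$. Homogeneity then extends the bound to all of $\mathbb{R}^n$: for $v \neq 0$, apply the sphere bound to $v/\|v\|$ and rescale to get $v^{\T} A v \geq \alpha \|v\|^2$, while the inequality is trivial at $v = 0$. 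The main (and only) subtlety is this compactness step — without it, ``$> 0$ pointwise'' would not upgrade to a uniform positive lower bound.

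I do not anticipate real obstacles here; the one thing worth being careful about is keeping the factor of $2$ consistent between the hypothesis on $A^{\T}+A$ and the conclusion on $A$, and noting explicitly why positive definiteness of $A^{\T}+A$ is equivalent to $v^{\T} A v > 0$ for all $v \neq 0$, which rests on the symmetry-by-scalar identity mentioned at the outset.
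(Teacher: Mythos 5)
Your proof is correct and complete. The paper states this proposition without proof, so there is nothing to compare against; your argument is the standard one, and both of its ingredients are sound: the symmetrization identity $v^{\T} A v = \tfrac{1}{2} v^{\T}(A^{\T}+A)v$ handles the forward direction immediately, and the compactness-plus-homogeneity step in the reverse direction is exactly the point that needs care (pointwise strict positivity of a quadratic form upgrades to a uniform lower bound only because the unit sphere is compact; equivalently, $\alpha$ can be taken to be the smallest eigenvalue of $\tfrac{1}{2}(A^{\T}+A)$). You also correctly dispose of the $v=0$ case and keep the factor of $2$ consistent by working with $\phi(v)=v^{\T}Av$ directly.
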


The monotonicity of $F(z)$ is closely related to the positive definiteness of $\nabla F(z)$ \citep{nagurney2013network}.

\begin{proposition}
Suppose that $F(z)$ is continuously differentiable on $\Omega$ and $\nabla F(z)$ (need not to be symmetric) is positive semi-definite (positive definite), then $F(z)$ is monotone (strictly monotone).
\end{proposition}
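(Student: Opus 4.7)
The plan is to reduce the multivariate monotonicity inequality to a one-dimensional statement via a line-segment parametrization, then invoke the fundamental theorem of calculus so that the pointwise positive (semi-)definiteness assumption on $\nabla F$ translates into a sign statement for $\langle F(z_1)-F(z_2),\,z_1-z_2\rangle$. The convexity of $\Omega$ (which holds in our setting, since $\Omega_{\lambda}$ is a bounded polyhedral set) is the ingredient that lets me carry out this reduction, because I need the entire segment $[z_2,z_1]$ to lie in the domain of $F$.

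The concrete steps are as follows. First, I fix arbitrary $z_1,z_2\in\Omega$ and define the scalar function $\phi:[0,1]\to\mathbb{R}$ by
\$
\phi(t)=\bigl\langle F\bigl(z_2+t(z_1-z_2)\bigr),\,z_1-z_2\bigr\rangle.
\$
Since $F$ is continuously differentiable and $\Omega$ is convex, $\phi$ is continuously differentiable on $[0,1]$, and the chain rule yields
\$
\phi'(t)=(z_1-z_2)^{\T}\,\nabla F\bigl(z_2+t(z_1-z_2)\bigr)\,(z_1-z_2).
\$
Second, I apply the fundamental theorem of calculus to obtain
\$
\bigl\langle F(z_1)-F(z_2),\,z_1-z_2\bigr\rangle=\phi(1)-\phi(0)=\int_{0}^{1}\phi'(t)\,\dif t.
\$
Third, I use the hypothesis: for the monotone case, positive semi-definiteness of $\nabla F$ (in the sense $v^{\T}\nabla F(z)v\ge 0$ for all $v$) gives $\phi'(t)\ge 0$ pointwise, so the integral is non-negative, which is exactly \eqref{eq:monotone}. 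For the strictly monotone case, positive definiteness together with $z_1\ne z_2$ implies $\phi'(t)>0$ for every $t\in[0,1]$, and continuity of $\phi'$ then forces $\int_0^1\phi'(t)\,\dif t>0$, giving strict inequality in \eqref{eq:monotone}.

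The only subtlety, and the spot I would treat with care, is the non-symmetric case. The convention adopted here is that positive (semi-)definiteness of a not-necessarily-symmetric matrix $A$ means $v^{\T}Av\ge 0$ (resp.\ $>0$) for all nonzero $v$; the identity $v^{\T}Av=v^{\T}\tfrac{A+A^{\T}}{2}v$ shows that this is equivalent to positive (semi-)definiteness of the symmetrization, so the quadratic form $\phi'(t)$ is controlled exactly as above and no symmetry of $\nabla F$ is ever invoked. There is no real obstacle beyond this bookkeeping point: the whole argument is a clean one-dimensional reduction, and the convexity of $\Omega$ needed to keep the segment inside the domain is guaranteed by the standing assumptions in Section~\ref{sec:fixed}.
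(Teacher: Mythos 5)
Your proof is correct, and it is the standard line-integral argument for this classical result; the paper itself does not prove the proposition but simply cites it from the VI literature (Nagurney), where exactly this reduction to the quadratic form $\phi'(t)=(z_1-z_2)^{\T}\nabla F(z_2+t(z_1-z_2))(z_1-z_2)$ is used. Your handling of the two potential pitfalls is sound: the segment $[z_2,z_1]$ stays in the domain because $\Omega_\lambda$ is convex under the standing polyhedral assumption, and the non-symmetric case is covered because the paper's own definition of positive (semi-)definiteness is already the quadratic-form condition $v^{\T}Av\ge 0$, so no symmetrization is even needed.
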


\begin{proposition}
Suppose that $F(z)$ is continuously differentiable on $\Omega$ and $\nabla F(z)$ is strongly positive definite, then $F(z)$ is strongly monotone.
\end{proposition}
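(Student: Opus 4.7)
The plan is to reduce the statement to a line-integral computation that converts the pointwise strong positive definiteness of $\nabla F$ into the integral inequality defining strong monotonicity. Fix arbitrary $z_1, z_2 \in \Omega$ and assume (as is implicit in the setup, since $\Omega$ must be convex for the hypothesis to yield a usable inequality between two points) that the segment $[z_2, z_1]$ lies in $\Omega$. Parameterize the segment by $z(t) = z_2 + t(z_1 - z_2)$ for $t \in [0,1]$, so that $z(t) \in \Omega$ and $z'(t) = z_1 - z_2$.

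Since $F$ is continuously differentiable, the fundamental theorem of calculus applied componentwise to $\phi(t) = F(z(t))$ gives
\#
F(z_1) - F(z_2) \;=\; \int_0^1 \nabla F(z(t)) \, (z_1 - z_2) \, \dif t.
\#
Taking the Euclidean inner product of both sides with $z_1 - z_2$ and pulling the inner product under the integral yields
\#
\left\langle F(z_1) - F(z_2),\, z_1 - z_2 \right\rangle \;=\; \int_0^1 (z_1 - z_2)^{\T} \nabla F(z(t)) \, (z_1 - z_2) \, \dif t.
\#
Now I would invoke the strong positive definiteness hypothesis: for every $z \in \Omega$ and every $v \in \mathbb{R}^n$, $v^{\T} \nabla F(z) v \geq \alpha \|v\|^2$. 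Applying this with $v = z_1 - z_2$ and $z = z(t)$ to the integrand gives a pointwise lower bound of $\alpha \|z_1 - z_2\|^2$, which is constant in $t$. Integrating over $[0,1]$ then produces
\#
\left\langle F(z_1) - F(z_2),\, z_1 - z_2 \right\rangle \;\geq\; \alpha \|z_1 - z_2\|^2,
\#
which is exactly the strong monotonicity inequality with the same modulus $\alpha$.

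There is no real obstacle in this argument; the only subtlety is making sure the strong positive definiteness hypothesis as stated in the paper (namely $v^{\T} A v \geq \alpha \|v\|^2$ for \emph{all} $v$, not just eigenvectors or symmetric parts) is applied to the possibly non-symmetric Jacobian $\nabla F(z(t))$. This is fine because the scalar $v^{\T} A v$ equals $v^{\T} \tfrac{1}{2}(A + A^{\T}) v$, so the hypothesis transfers seamlessly to the quadratic form appearing inside the integral. The convexity of $\Omega$ (needed to guarantee $z(t) \in \Omega$) is implicit throughout the paper and is consistent with the standing Assumption that $\Omega_\lambda$ is a bounded polyhedral set.
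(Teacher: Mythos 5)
Your proof is correct: the line-integral identity $F(z_1)-F(z_2)=\int_0^1 \nabla F(z_2+t(z_1-z_2))(z_1-z_2)\,\dif t$ followed by the pointwise bound $v^{\T}\nabla F(z)v\ge\alpha\|v\|^2$ is exactly the standard argument, and your care about the non-symmetric Jacobian and the convexity of $\Omega$ (guaranteed here by the polyhedral assumption) is well placed. The paper itself gives no proof of this proposition—it is quoted from the literature on variational inequalities—so your write-up simply supplies the canonical proof of the cited fact.
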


Particularly, if $\nabla F(x)$ is symmetric, then $F(x)$ is strongly monotone if and only if $F(x)$ is strictly monotone.

Eventually, the following propositions provide conditions under which the existence and uniqueness of the solution to $\vi$ are guaranteed \citep{hartman1966some, mancino1972convex}.

\begin{proposition}[Existence condition]
\label{thm:existence}
If $F_{\lambda}$ is continuous on $\Omega_{\lambda}$ and $\Omega_{\lambda}$ is compact and convex, then $\vi$ admits at least one solution.
\end{proposition}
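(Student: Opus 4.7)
The plan is to derive existence from Brouwer's fixed-point theorem applied to the projection map introduced in Proposition \ref{thm:opt}. Fix any $r > 0$ and consider the self-map $h_\lambda : \Omega_\lambda \to \Omega_\lambda$ defined by $h_\lambda(z) = \mathcal{P}_{\Omega_\lambda}(z - r F_\lambda(z))$. By Proposition \ref{thm:opt}, every fixed point of $h_\lambda$ is a solution to $\vi$ and vice versa, so it suffices to exhibit a fixed point of $h_\lambda$.

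First I would verify the hypotheses required to invoke Brouwer's theorem. Since $\Omega_\lambda$ is compact and convex, it is homeomorphic to a closed ball in a finite-dimensional Euclidean subspace, so Brouwer's theorem applies to any continuous self-map. The range of $h_\lambda$ lies in $\Omega_\lambda$ by definition of the projection operator. Continuity of $h_\lambda$ follows by composing the continuous map $z \mapsto z - r F_\lambda(z)$ (continuous because $F_\lambda$ is continuous on $\Omega_\lambda$) with the projection operator $\mathcal{P}_{\Omega_\lambda}$, which is well-known to be single-valued and $1$-Lipschitz (hence continuous) whenever $\Omega_\lambda$ is closed and convex. Compactness of $\Omega_\lambda$ ensures that $z - r F_\lambda(z)$ ranges over a bounded set, so no issues arise at infinity.

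With these pieces in place, Brouwer's fixed-point theorem yields a point $z^* \in \Omega_\lambda$ with $z^* = h_\lambda(z^*)$, and invoking Proposition \ref{thm:opt} a second time identifies $z^*$ as a solution to $\vi$.

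The only potentially delicate step is the justification that $\mathcal{P}_{\Omega_\lambda}$ is continuous (indeed nonexpansive), but this is a standard consequence of the variational characterization of the projection onto a closed convex set together with the parallelogram identity; I would simply cite it rather than reproduce the argument. The overall obstacle level is low, as this result is essentially the classical Hartman--Stampacchia existence theorem specialized to the setting already developed in the paper.
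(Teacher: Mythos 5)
Your argument is correct: the paper does not prove this proposition but simply cites Hartman--Stampacchia, and your Brouwer-fixed-point argument applied to $h_\lambda(z) = \mathcal{P}_{\Omega_\lambda}(z - r F_\lambda(z))$ is precisely the standard proof of that cited result, with all hypotheses (continuity of $F_\lambda$, nonexpansiveness of the projection, compact convexity of $\Omega_\lambda$, and the fixed-point equivalence of Proposition \ref{thm:opt}) correctly accounted for. No gaps; the minor point about a compact convex set possibly having empty interior is handled by working in its affine hull, as you implicitly do.
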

\begin{proposition}[Uniqueness condition]
\label{thm:uniqueness}
If $F_{\lambda}$ is strictly monotone on $\Omega_{\lambda}$, then $\vi$ admits a unique solution if one exists. If $F_{\lambda}$ is strongly monotone, then it always admits one and only one solution.
\end{proposition}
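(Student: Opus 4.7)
The plan is to treat the two statements separately, since uniqueness is essentially a one-line consequence of the definition of $\vi$ combined with (strict) monotonicity, while existence under strong monotonicity requires a short additional argument when $\Omega_\lambda$ is not assumed compact (and is immediate from Proposition \ref{thm:existence} when it is).

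For the uniqueness assertion, I would argue by contradiction. Suppose $z_1^*, z_2^* \in \Omega_\lambda$ are two distinct solutions to $\vi$. Writing the defining inequality of $\vi$ at $z_1^*$ with test point $z_2^*$, and at $z_2^*$ with test point $z_1^*$, yields
\[
\langle F_\lambda(z_1^*),\, z_2^* - z_1^*\rangle \geq 0, \qquad \langle F_\lambda(z_2^*),\, z_1^* - z_2^*\rangle \geq 0.
\]
Adding these gives $\langle F_\lambda(z_1^*) - F_\lambda(z_2^*),\, z_1^* - z_2^*\rangle \leq 0$, which contradicts the strict monotonicity inequality at $z_1^* \neq z_2^*$. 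This gives uniqueness whenever a solution exists, which covers the first statement and also delivers the uniqueness half of the second statement (strong monotonicity implies strict monotonicity by definition).

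For existence under strong monotonicity, if $\Omega_\lambda$ is already compact, Proposition \ref{thm:existence} applies directly. Otherwise I would use a truncation: fix any reference point $z_0 \in \Omega_\lambda$ and for $R>0$ consider $\Omega_\lambda^R := \Omega_\lambda \cap \bar B(z_0, R)$, which is compact and convex. By Proposition \ref{thm:existence} there is a solution $z_R^*$ of the truncated VI on $\Omega_\lambda^R$. Testing the truncated VI at $z=z_0$ gives $\langle F_\lambda(z_R^*), z_R^* - z_0\rangle \leq 0$, while strong monotonicity with modulus $\alpha$ gives $\langle F_\lambda(z_R^*) - F_\lambda(z_0), z_R^* - z_0\rangle \geq \alpha\|z_R^* - z_0\|^2$. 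Combining the two inequalities and applying Cauchy--Schwarz to $\langle F_\lambda(z_0), z_R^* - z_0\rangle$ yields the a priori bound $\|z_R^* - z_0\| \leq \|F_\lambda(z_0)\|/\alpha$, independently of $R$. Choosing any $R > \|F_\lambda(z_0)\|/\alpha$ places $z_R^*$ in the relative interior of $\Omega_\lambda^R$, so an arbitrary feasible direction $z' - z_R^*$ (with $z' \in \Omega_\lambda$) can be scaled to stay inside $\Omega_\lambda^R$; passing to the limit in the scaling parameter shows that $z_R^*$ in fact solves the original, untruncated $\vi$.

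The main obstacle is the interior-localization step: deriving the uniform bound $\|z_R^* - z_0\| \leq \|F_\lambda(z_0)\|/\alpha$ requires carefully combining the truncated VI inequality with the strong-monotonicity lower bound, and one must verify that once $z_R^*$ is strictly interior to $\bar B(z_0,R)$, the feasible directions from $z_R^*$ in $\Omega_\lambda$ and in $\Omega_\lambda^R$ coincide locally, so that the VI on the truncation upgrades to the VI on $\Omega_\lambda$. An alternative route, available if one additionally assumes $F_\lambda$ is Lipschitz on $\Omega_\lambda$, is to invoke Banach's fixed-point theorem on the mapping $h_\lambda(z)=\mathcal{P}_{\Omega_\lambda}(z - r F_\lambda(z))$ of Proposition \ref{thm:opt} with $r>0$ chosen small enough that strong monotonicity plus Lipschitzness makes $h_\lambda$ a contraction; this simultaneously delivers existence and uniqueness and links the proof to the convergence analysis of Proposition \ref{thm:convergence}. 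I would prefer the truncation argument as the main proof (no Lipschitz assumption needed) and mention the Banach alternative as a remark.
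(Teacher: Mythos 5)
The paper does not actually prove Proposition~\ref{thm:uniqueness}; it is quoted as a classical result with citations to Hartman--Stampacchia and Mancino--Stampacchia, so there is no in-paper argument to compare against. Your proposal is the standard textbook proof and it is correct. The uniqueness half (test each solution against the other, add, contradict strict monotonicity) is exactly right and, as you note, covers the strongly monotone case as well. The existence half via truncation is also the classical argument, and the a priori bound $\|z_R^*-z_0\|\leq\|F_\lambda(z_0)\|/\alpha$ is derived correctly. Two small remarks. First, your phrase ``places $z_R^*$ in the relative interior of $\Omega_\lambda^R$'' is slightly off: the bound only guarantees $z_R^*$ lies in the interior of the ball $\bar B(z_0,R)$, not in the relative interior of the intersection (it may sit on the boundary of $\Omega_\lambda$); but that is all you need, since for any $z'\in\Omega_\lambda$ and small $t>0$ the point $z_R^*+t(z'-z_R^*)$ stays in $\Omega_\lambda\cap\bar B(z_0,R)$ by convexity, and dividing the truncated VI inequality by $t$ upgrades it to the full VI --- which is the mechanism you describe in the next sentence, so this is a wording issue rather than a gap. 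Second, under the paper's own Assumption~2 the set $\Omega_\lambda$ is a bounded polyhedron, hence already compact and convex, so in the paper's setting existence follows directly from Proposition~\ref{thm:existence} and the truncation step is only needed to justify the proposition in the generality in which it is stated (unbounded closed convex $\Omega_\lambda$); your argument implicitly uses closedness and convexity of $\Omega_\lambda$ to make $\Omega_\lambda^R$ compact and convex, which is worth stating explicitly. The Banach fixed-point alternative you mention is also standard and correctly requires the additional Lipschitz hypothesis; keeping it as a remark is the right call.
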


\section{Details of Newton's method}

\subsection{Proof of Theorem \ref{thm:newton}}
\label{app:newton}

\begin{proof}
As $e_{\lambda}(z)$ is continuously differentiable and $\nabla_{z} e_{\lambda}(\bar z^*)$ is nonsingular, $\nabla_z e_{\lambda}(z)$ is nonsingular in a neighborhood $\mathcal{B}_1(\bar z^*)$ of $\bar z^*$. Denote $E^k = \| \nabla_z e_{\lambda}(z^k) \|$, $y^k = z^k - r F_{\lambda}(z^k)$, and $\bar y = \bar z^* - r F_{\lambda}(\bar z^*)$. Starting from $z^0 \in \mathcal{B}_1(\bar z^*)$, we can recursively get
\begin{equation}
\begin{split}
    \|z^{k + 1} - \bar z^* \| &= \| z^k - \nabla_z e_{\lambda}(z^k)^{-1} \cdot e_{\lambda}(z^k) - \bar z^*\| \\
    &\leq E^k \cdot \|\nabla_z e_{\lambda}(z^k) \cdot (z^k - z^*) - e_{\lambda}(z^k)  \| \\
    &= E^k \cdot \|\nabla_z e_{\lambda}(z^k) \cdot (z^k - \bar z^*) - (e_{\lambda}(z^k) - e_{\lambda}(\bar z^*) )   \| \\
    &= E^k \cdot \| h_{\lambda}(z^k) - h_{\lambda}(\bar z^*) - (I - \nabla_z e_{\lambda}(z^k)) \cdot (z^k - \bar z^*) \| \\
    &= E^k \cdot \|  g_{\lambda}(y^k) - g_{\lambda}(\bar y) - \nabla_{y} g_{\lambda}(y^k) \cdot \left(I - r \cdot \nabla_{z} F_{\lambda}(z^k) \right)
    \cdot (z^k - \bar z^*)\|.
    \label{eq:super-linear}
\end{split}
\end{equation}

As both $g_{\lambda}(y)$ and $F_{\lambda}(z)$ are continuously differentiable, denoting $y = z - r F_{\lambda}(z)$, there exists another neighborhood $\mathcal{B}_2(\bar z^*)$ of $\bar z^*$, such that when $z \in \mathcal{B}_2(\bar z^*)$ we have
\begin{equation}
\begin{split}
    y - \bar y  &= z - \bar z^*  - r \cdot (F_{\lambda}(z) -  F_{\lambda}(\bar z^*)) \\
    &= (I - r \cdot \nabla F_{\lambda}(\bar z^*)) \cdot (z - \bar z^*)  + o(\|z - \bar z^*\|).
\end{split}
\end{equation}
Consequently, we have
\begin{equation}
\begin{split}
    g_{\lambda}(y) &- g_{\lambda}(\bar y) - \nabla_{y} g_{\lambda}(y) (y - \bar y) = o(\|y - \bar y\|)
    = o(\|z - \bar z^*\|).
\end{split}
\end{equation}
Continuing from \eqref{eq:super-linear}, we have
\begin{equation}
\begin{split}
     \|z^{k + 1} - &\bar z^* \| \leq o(z^k - \bar z^*) + E^k \cdot \| \nabla_{y} g_{\lambda}(y^k) \cdot \left( y^k - \bar y - (I - r \cdot \nabla_{z} F_{\lambda}(z^k))
    \cdot (z^k - \bar z^*)  \right)\| \\
    &\leq o(\|z^k - \bar z^*\|) + r \cdot E^k \cdot\| \nabla_{y} g_{\lambda}(y^k) \|  \cdot \|  F_{\lambda}(z^k) -  F_{\lambda}(\bar z^*) - \nabla_{z} F_{\lambda}(z^k)
    \cdot (z^k - \bar z^*)  \| \\
    &=o(\|z^k - \bar z^*\|) + r \cdot E^k \cdot \| \nabla_{y} g_{\lambda}(y^k) \| \cdot o(\|z^k - \bar z^*\|) = o(\| z^k - \bar z^* \|).
\end{split}
\end{equation}
Therefore, starting from $z^0 \in \mathcal{B}_1(\bar z^*) \cap \mathcal{B}_2(\bar z^*)$, the sequence converges to $\bar z^*$ superlinearly. 
\end{proof}

\subsection{Implementation details}
\label{sec:newton-implementation}

To enable global convergence, we first use the projection method to find a point within a sufficiently small neighborhood of the solution to $\vi$ and then use the Newton's method to find the solution. Denote $G(z)$ as the gap (or merit) function to $\vi$, e.g. \citep{larsson1994class}, a globally convergent method for solving $\vi$ is given in Algorithm \ref{alg:newton}. Under Assumptions \ref{ass:monotone} and \ref{ass:coercive}, a sufficiently small $r$ can guarantee convergence for projection methods. Therefore, we dynamically reduce $r$ at each iteration in the projection phase, if the optimality gap does not decrease after the projection. In the Newton's phase, for all positive $r$, the solution to $\vi$ satisfies the fixed-point equation \eqref{eq:newton-fixed}. Therefore, theoretically, any $r > 0$ can be used to derive the Newton's direction. In practice, we find that a relatively larger $r$ can lead to a faster convergence speed. Therefore, we dynamically increase $r$ when the convergence speed, measured by the decrease of optimality gap after each iteration, is sufficiently small. Meanwhile, if $\nabla_{z} e_{\lambda}(z^k)$ is singular, we modify $\nabla_{z} e_{\lambda}(z^k)$ by adding a correction matrix $\eta^k I$ to prevent divergence.
\begin{algorithm}[H]
\renewcommand{\algorithmicrequire}{\textbf{Input:}}
\renewcommand{\algorithmicensure}{\textbf{Output:}}
\caption{Projection-Newton method for solving $\vi$.}
\algsetup{linenosize=\scriptsize}
\footnotesize
%\scriptsize
\begin{algorithmic}[1]\label{alg:newton}
\REQUIRE Initial point $z^0$ and scalar $r^0$, tolerance value $\epsilon_0$, $\epsilon_1$, $\delta_0$, and $\delta_1$, control parameters $0 < \alpha_0 < 1$ and $\alpha_1 > 1$
\STATE Set $k = 0$ and $m^0 = \infty$.
\WHILE{$m^k < \epsilon_0$}
    \STATE Set $z^{k + 1} = g_{\lambda}(z^{k} - r^{k} F(z^{k}))$ and $m^{k + 1} = G(z^{k + 1})$.
    \STATE  Set $r^{k + 1} = \alpha_0 r^{k}$ if $m^{k + 1} / m^{k} \geq 1 - \delta_1$, and $r^{k + 1} = r^{k}$, otherwise. Set $k = k + 1$.
\ENDWHILE
\WHILE{$m^k < \epsilon_1$}
    \STATE Compute $H^k = \nabla_z e_{\lambda}(z^k)$ using \eqref{eq:hessian} with $r^k$. If $H^k$ is singular, find $\eta^k > 0$ such that $H^k + \eta^k I$ is non-singular. Solve $(H^k + \eta^k I) \cdot d^k = e_{\lambda}(z^k)$ and set $z^{k + 1} = z^k - d^k$ and $m^{k + 1} = G(z^{k + 1})$.
    \STATE Set $r^{k + 1} = \alpha_1 r^{k}$ if $m^{k + 1} / m^{k} \geq 1 - \delta_2$, and set $r^{k + 1} = r^{k}$, otherwise. Set $k = k + 1$.
\ENDWHILE
\end{algorithmic}
\end{algorithm}

\subsection{Additional experiments}
\label{sec:forward}

We test Algorithm \ref{alg:newton} on finding Wardrop's equilibrium. Before the experiments, we first provide some supplementary details of the VI problem given in Example \ref{eg:routing}. In the network $G$, denote $W$ as the set of source-sink pairs and $K$ as the set of paths. Suppose that each $w \in W$ is associated with $q_w$ infinitesimal agents. Denote $f_k$ as the number of agents choosing path $k \in K$. Let $M$ be the path-demand incidence matrix and $\Delta$ be the path-edge incidence matrix. Then the feasible region of path flow $f$ and edge flow $x$ are $\mathcal{F} = \{f: f \geq 0, M f = q \}$ and $\mathcal{X} = \{x: x = \Delta f, f \in \mathcal{F}\}$, respectively. The Wardrop's equilibrium can be written into the edge-based formulation as in Example \ref{eg:routing}:
find $x^* \in \mathcal{X}$ such that
\begin{align}
    &\left< c(x^*), \, x - x^* \right> \geq 0, \quad \text{for all}~x \in \mathcal{X},
    \label{eq:vi-edge}
\end{align}
or equivalently, the path-based formulation: find $f^* \in \mathcal{F}$ such that
\begin{equation}
    \left< \Delta^{\T} c(\Delta f^*), \, f - f^* \right> \geq 0, \quad \text{for all}~f \in \mathcal{F}.
    \label{eq:vi-path}
\end{equation}
Here the parameters in $c(x)$ are omitted for simplicity. 
We test the algorithms on a two-loop city network as shown in Figure \ref{fig:add-city}. All edges in the network support the driving mode, while the inner loop and the outer loop of the city are also supported by public transport services. We use the same method as in \cref{sec:learning-to-design} to model the mode and route choices together (spliting each node into 4 sub-nodes). The cost function $c_e(x)$ on each edge has the following form
\begin{equation}
    c_e(x) = 
    \begin{cases}
    T_e \left(1 + \left(\frac{x_e}{s_e})^2\right)\right) + \gamma m_e + \tau \left(1 + \left(\frac{x_e}{q_e})^2\right)\right), ~\text{for driving and riding edges}, \\
    w_e, ~\text{for starting edges}.
    \end{cases}
    \label{eq:app-cost}
\end{equation}
We set $\gamma = 1$ and $\tau = 1$; the value of parameters are given in Table \ref{tab:city}. The travel demands are generated from independent and identically distributed uniform $U(5, 10)$ distributions. We compare the convergence speed of three methods for finding equilibrium: the gradient-projection (GP) method \citep{jayakrishnan1994faster}, the projection-Newton (PN) method and the projection method.
\begin{itemize}[leftmargin=10pt, itemsep=2pt,topsep=10pt]
    \item GP method.  The GP method is a \textit{specially} designed method that is widely used to find Wardrop's equilibrium. See \citep{jayakrishnan1994faster} for more details.
    \item PN method (Algorithm \ref{alg:newton}). We implement the PN method on the path-based formulation \eqref{eq:vi-path} instead of the edge-based formulation \eqref{eq:vi-edge} to improve the efficiency. We set $\epsilon_0 = 10^3$, $\epsilon_1 = 10^{-3}$, $\delta_0 = 10^{-3}$, $\delta_1 = 0.2$, $\alpha_0 = 0.8$, $\alpha_1 = 2$, and start from $r^0 = 0.5$. The Jacobian matrix is derived using the \texttt{cvxpylayers} package in Python at each iteration. 
    \item Projection method. We set $\epsilon_0 = 10^{-3}$ and other parameters same as the projection phase in the PN method. 
\end{itemize}
The initial point $f^0$ is derived from the all-or-nothing assignment, i.e. all the agents choose the shortest paths according to the costs $c_e(0)$ for all $e$. For the path set $K$, it is inefficient and also unnecessary to include all the paths at the beginning. Therefore, the path set $K$ is augmented at each iteration to include the current shortest paths. The gap function for $f$ is set as $G(f) = \left<c(\Delta f),\Delta \bar f - \Delta f \right>$, where $\bar f$ is derived from the assignment assuming that all of the agents choose the shortest paths based on the edge costs at $f$. We stop the computation after 150 iterations. We test the algorithms under four demand levels (1x, 2x, 3x, and 4x) and the convergence processes are shown in Figure \ref{fig:convergence}. We see that the PN method is faster than the others. Noted that the GP method is a specialized method for finding Wardrop's equilibrium while the PN method is a general method for solving VIs, this result is a powerful evidence that our Newton's method, directly locating the solution through root-finding, is an efficient method for solving VIs. 
\begin{figure}[H]
\centering
\tikzset{every picture/.style={line width=0.75pt}}
    \begin{minipage}[b]{0.37\textwidth}
    \centering
    \includegraphics[width=0.55\textwidth]{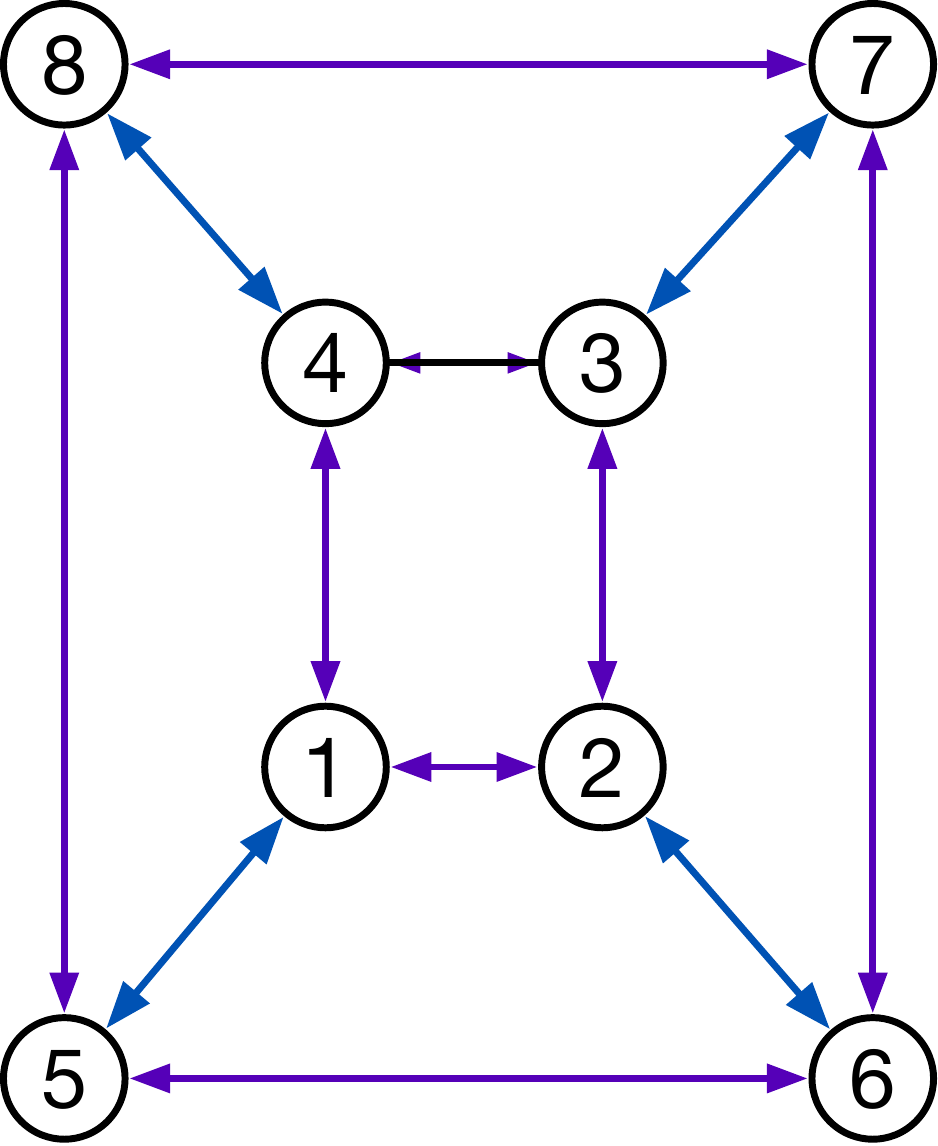}
    \vspace{15pt}
    \captionof{figure}{A two-loop city network.}
    \label{fig:add-city}
    \end{minipage}
    \begin{minipage}[b]{0.55\textwidth}
    \centering
    \includegraphics[width=1\textwidth]{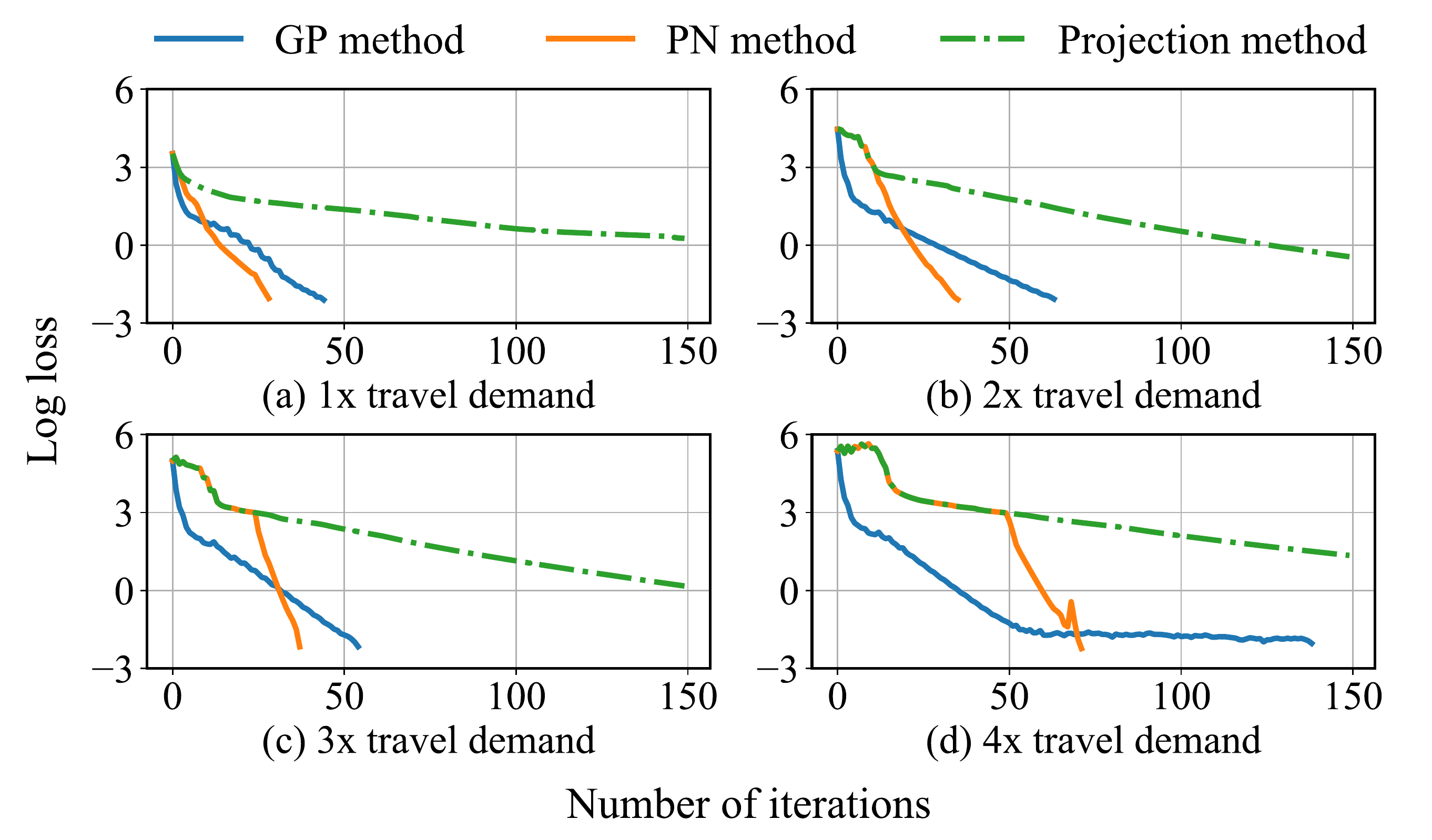}
    \captionof{figure}{Convergence process.}
    \label{fig:convergence}
    \end{minipage}
\end{figure}
\renewcommand{\arraystretch}{1.2}
\begin{table}[H]
  \centering
  \scriptsize
  %\footnotesize
    \begin{tabular}{ccccccc}
    \toprule
    Mode  & Edge  & $s_e$     & $T_e$     & $m_e$     & $w_e$     & $q_e$ \\
    \midrule
    \multirow{4}[2]{*}{Driving} & v-v (inner loop) & 10    & 0.833 & 0.167 & -     & $\infty$ \\
          & v-v (outer loop) & 12    & 2.000 & 0.400 & -     & $\infty$ \\
          & v-v (radial edges) & 15    & 0.700 & 0.140 & -     & $\infty$ \\
          & s-v   & -     & -     & -     & 0     & - \\
    \midrule
    \multirow{4}[2]{*}{Riding} & p-p (inner loop) & $\infty$ & 0.917 & 0.023 & -     & 20 \\
          & p-p (ourer loop) & $\infty$ & 2.200 & 0.055 & -     & 25 \\
          & s-p (from inner loop) & -     & -     & -     & 1     & - \\
          & s-p (from outer loop) & -     & -     & -     & 3     & - \\
    \bottomrule
    \end{tabular}%
    \caption{Edge parameters of the two-loop city network.}
  \label{tab:city}%
\end{table}%

\section{Proof of Theorem \ref{thm:unrolling}}
\label{app:unrolling}

\begin{proof}

For each $\lambda$, the function defined as the limiting point of the projection method is
\begin{equation}
    z^*(\lambda) = \lim_{k \to \infty}  z^k(\lambda) =
    %\lim_{k \to \infty} z^k(\lambda) =
    \lim_{k \to \infty}  h_{\lambda}^{(k)}(z^0)
    \label{eq:unrolling-limit}.
\end{equation}
As $F_{\lambda}(z)$ is continuously differentiable and $r$ satisfies the convergence condition \eqref{eq:conv-condition} for $\bar \lambda$, there exists a neiborhood $\mathcal{B}_1(\bar \lambda)$ of $\bar \lambda$ such that
\begin{equation}
    \vertiii{I - r \cdot \nabla_z F_{\lambda}(z)} < 1, \quad \text{for all}~z \in \Omega_{\lambda},~\text{and}~ \lambda \in \mathcal{B}(\bar \lambda).
\end{equation}
According to Proposition \ref{thm:convergence}, for all $\lambda \in \mathcal{B}(\bar \lambda)$, the sequence $z^k(\lambda)$ converges to a point $z^*(\lambda)$ which is the solution to $\vi$. Therefore, the function \eqref{eq:unrolling-limit} is well defined. If $F_{\lambda}(z)$ is strongly monotone, then there exists another neighborhood $\mathcal{B}_2(\bar \lambda)$ of $\bar \lambda$ such that for all $\lambda \in \mathcal{B}_2(\bar \lambda)$, the solution to $\vi$ is unique. Then based on Proposition \ref{thm:dif},  $z^*(\lambda)$ is differentiable in $\mathcal{B}_1(\bar \lambda) \cap \mathcal{B}_2(\bar \lambda)$.
As we assume that both $F_{\lambda}(z)$ and $g_{\lambda}(y)$ are continuously differentiable, so is $h_{\lambda}(z)$. As a result, both $\nabla_{z} h_{\lambda}(z)$ and $\nabla_{\lambda} h_{\lambda}(z)$ are continuous. 
Hence in the recursive equation \eqref{eq:jacobian}, set $\lambda = \bar \lambda$ and let $k \to \infty$, we get
\begin{equation}
    \frac{\partial z^{*}(\bar \lambda)}{\partial \lambda}
    = \nabla_{z} h_{\lambda}(z^{*}(\bar \lambda)) \cdot \frac{\partial z^*(\bar \lambda)}{\partial \lambda} + \nabla_{\lambda} h_{\lambda}(z^*(\bar \lambda)).
\end{equation}
Therefore, we have
\begin{equation}
    \left(I - \nabla_{z} h_{\lambda}(z^{*}(\bar \lambda) \right) \cdot \frac{\partial z^{*}(\bar \lambda)}{\partial \lambda}
    =  \nabla_{\lambda} h_{\lambda}(z^*(\bar \lambda)).
\end{equation}
Eventually, based on Proposition \ref{thm:implicit}, the sequence $\partial \bar z^k(\bar \lambda) / \partial \lambda$ converges to $\partial z^*(\bar \lambda) / \partial \lambda$, which equals to the Jacobian matrix derived from the implicit differention, namely $\nabla z^*(\bar \lambda)$.

\end{proof}

\section{Experimental settings}
\label{app:routing}

\subsection{Braess paradox}
\label{sec:app-1}
In this experiment, we test the accuracy of differentiating through a VI problem by quantifying the Braess paradox. We test both the explicit method and the implicit method, and compare the results with the numerical differentiation.
\begin{itemize}[leftmargin=10pt, itemsep=2pt,topsep=2pt]
    \item Explicit method. We implement the projection method (equivalently, the projection phase of Algorithm \ref{alg:newton}) on the path-based formulation \eqref{eq:vi-path} to find equilibrium. In the forward propagation, we set $\epsilon_0 = 10^{-4}$, $\delta_0 = 10^{-3}$, $\alpha_0 = 0.8$, and start from $r^0 = 0.5$. We directly build the computation graph using the \texttt{cvxpylayers} package.
    \item Implicit method. We use the same equilibrium solution as in the explicit method. Different from using the path-based formulation \eqref{eq:vi-path} in the forward propagation, we implement implicit differentiation on the edge-based formulation \eqref{eq:vi-edge}. We also use the \texttt{cvxpylayers} package to compute the Jacobian matrices. 
    \item Numerical differentiation. We disturb the capacity by $+5\%$ and use finite-difference method to compute the gradient.
\end{itemize}

\subsection{Transportation system operation}

\label{sec:app-2}

In this experiment, we test the performance of our framework on the operation of a transportation system. The cost function has the same form as \eqref{eq:app-cost}, and the parameters are given in Table \ref{tab:d}. Meanwhile, we set $\gamma = 1$ and $\tau = 1$.

\vspace{5pt}
% Table generated by Excel2LaTeX from sheet 'Sheet4'
\begin{table}[htbp]
  \centering
  \scriptsize
  %\footnotesize
    \begin{tabular}{ccccccc}
    \toprule
    Mode  & Edge  & $s_e$     & $T_e$     & $m_e$     & $w_e$     & $q_e$ \\
    \midrule
    \multirow{2}[2]{*}{Driving} & v-v   & 10    & 1.0     & 0.25  & -     & $\infty$ \\
          & s-v   & -     & -     & -     & 0     & - \\
    \midrule
    \multirow{3}[2]{*}{Riding} & p-p (left to right) & $\infty$ & 1.1   & 0.05  & -     & 18 \\
          & p-p (right to left) & $\infty$ & 1.1   & 0.05  & -     & 22 \\
          & s-p   & -     & -     & -     & 1     & - \\
    \bottomrule
    \end{tabular}%
    \caption{Edge parameters of the linear city network.}
  \label{tab:d}%
\end{table}%

\paragraph{Learning.} The number of periods is set as $N = 8$. In each period, the travel demands pair are generated from independent and identically distributed uniform $U(5, 10)$ distributions. We use the first 6 periods for training and the last 2 periods for testing.  We use the objective function given in the learning mode of Example \ref{eg:vi-routing}, assuming that the number of agents (flow) on the driving and riding edges can be observed.
In the forward propagation, we use Algorithm \ref{alg:newton} to find equilibrium. We set $\epsilon_0 = 1$ and $\epsilon_1 = 10^{-3}$; other parameters are the same as in Appendix \ref{sec:forward}. In the backward propagation, we use the implicit method on the edge-based formulation \eqref{eq:vi-edge}.
We consider two types of parameters
initialization strategies: \textcircled{\scriptsize{1}} high riding costs; and \textcircled{\scriptsize{2}} low riding costs. For \textcircled{\scriptsize{1}}, we set $\gamma^0 = 0.2$, $\tau^0 = 1.5$, and $q_e^0 = 10$ for all riding edges. For \textcircled{\scriptsize{2}}, we set $\gamma^0 = 1.5$, $\tau^0 = 0.2$, and $q_e^0 = 30$. The learning rate for $\gamma$ and $q_e$ are set as $10^{-4}$, and we consider two learning rates for $\gamma$: \textcircled{\scriptsize{i}} $10^{-3}$, and \textcircled{\scriptsize{ii}} $10^{-4}$. The 4 hyperparameters settings in Figure \ref{fig:training} are set as (a): \textcircled{\scriptsize{1}} + \textcircled{\scriptsize{i}}; (b) \textcircled{\scriptsize{1}} + \textcircled{\scriptsize{ii}}; (c) \textcircled{\scriptsize{2}} + \textcircled{\scriptsize{i}}; (d) \textcircled{\scriptsize{2}} + \textcircled{\scriptsize{ii}}. The learned value of parameters are given in Table \ref{tab:value}.

\begin{table}[H]
  \centering
  %\footnotesize
  \scriptsize
    \begin{tabular}{ccccccccccc}
    \toprule
    \multirow{2}[4]{*}{} & \multirow{2}[4]{*}{$\gamma$} & \multirow{2}[4]{*}{$\tau$} & \multicolumn{8}{c}{$q_e$} \\
\cmidrule{4-11}          &       &       & 1-2   & 2-1   & 2-3   & 3-2   & 3-4   & 4-3   & 4-5   & 5-4 \\
    \midrule
    (a)   & 0.914  & 1.012  & 17.761  & 21.928  & 18.396  & 22.421  & 17.971  & 22.010  & 18.692  & 22.825  \\
    (b)   & 0.293  & 0.957  & 18.611  & 23.379  & 18.495  & 22.624  & 17.906  & 22.027  & 19.695  & 24.364  \\
    (c)   & 0.014  & 1.164  & 23.097  & 28.021  & 20.201  & 25.167  & 20.402  & 24.831  & 21.447  & 27.551  \\
    (d)   & 1.250  & 1.289  & 20.828  & 24.863  & 19.798  & 24.397  & 20.577  & 24.869  & 19.193  & 23.681  \\
    \bottomrule
    \end{tabular}%
    \caption{Learned value of parameters.}
  \label{tab:value}%
\end{table}%
According to Figure \ref{fig:training}, (a) produces the smallest fitting loss. Based on Table \ref{tab:value}, the learned values in (a) are also the closest to the true values.

\paragraph{Intervention.} In the forward and backward propagation, we use the same methods as in the learning mode. Based on the learned cost functions in (a), the intervention mode produces similar results compared with the benchmark. The learned cost functions can also be used for other applications, ranging from network expansions to public transport service pricing.

\end{document}